\newtheorem{theorem}{Theorem}
\newtheorem{lemma}{Lemma}
\newtheorem{assumption}{Assumption}
\definecolor{full}{HTML}{E76BF3}
\definecolor{trun}{HTML}{00BA38}
\definecolor{ours}{HTML}{619CFF}
\definecolor{lightteal}{rgb}{0.7, 0.9, 0.9}
\title{Denoising Score Distillation: From Noisy Diffusion Pretraining to One-Step High-Quality Generation
}
\author{
    Tianyu Chen$^{*, 1}$, Yasi Zhang$^{*, 2}$,  Zhendong Wang$^3$,  Ying Nian Wu$^2$,  \\   \textbf{Oscar Leong}$^{\dagger,2}$\textbf{,}   \textbf{Mingyuan Zhou}$^{\dagger,1}$ \\
    $^*$Equal contribution  $^\dagger$Equal advising  \\
    $^1$University of Texas at Austin  $^2$University of California, Los Angeles  $^3$Microsoft  
}
\begin{document}
\maketitle

\begin{abstract}

Diffusion models have achieved remarkable success in generating high-resolution, realistic images across diverse natural distributions. However, their performance heavily relies on high-quality training data, making it challenging to learn meaningful distributions from corrupted samples. This limitation restricts their applicability in scientific domains where clean data is scarce or costly to obtain. In this work, we introduce denoising score distillation (DSD), a surprisingly effective and novel approach for training high-quality generative models from low-quality data. DSD first pretrains a diffusion model exclusively on noisy, corrupted samples and then distills it into a one-step generator capable of producing refined, clean outputs. While score distillation is traditionally viewed   as a method to accelerate diffusion models, we show that it can also significantly enhance sample quality, particularly when starting from a degraded teacher model. Across varying noise levels and datasets, DSD consistently improves generative performance—we summarize our empirical evidence in Fig. \ref{fig:first_plot}. Furthermore, we provide theoretical insights showing that, in a linear model setting, DSD identifies the eigenspace of the clean data distribution’s covariance matrix, implicitly regularizing the generator. This perspective reframes score distillation as not only a tool for efficiency but also a mechanism for improving generative models, particularly in low-quality data settings.

\begin{figure}[htbp]
    \centering
    \begin{subfigure}{0.32\textwidth}
        \centering
        \includegraphics[width=\linewidth]{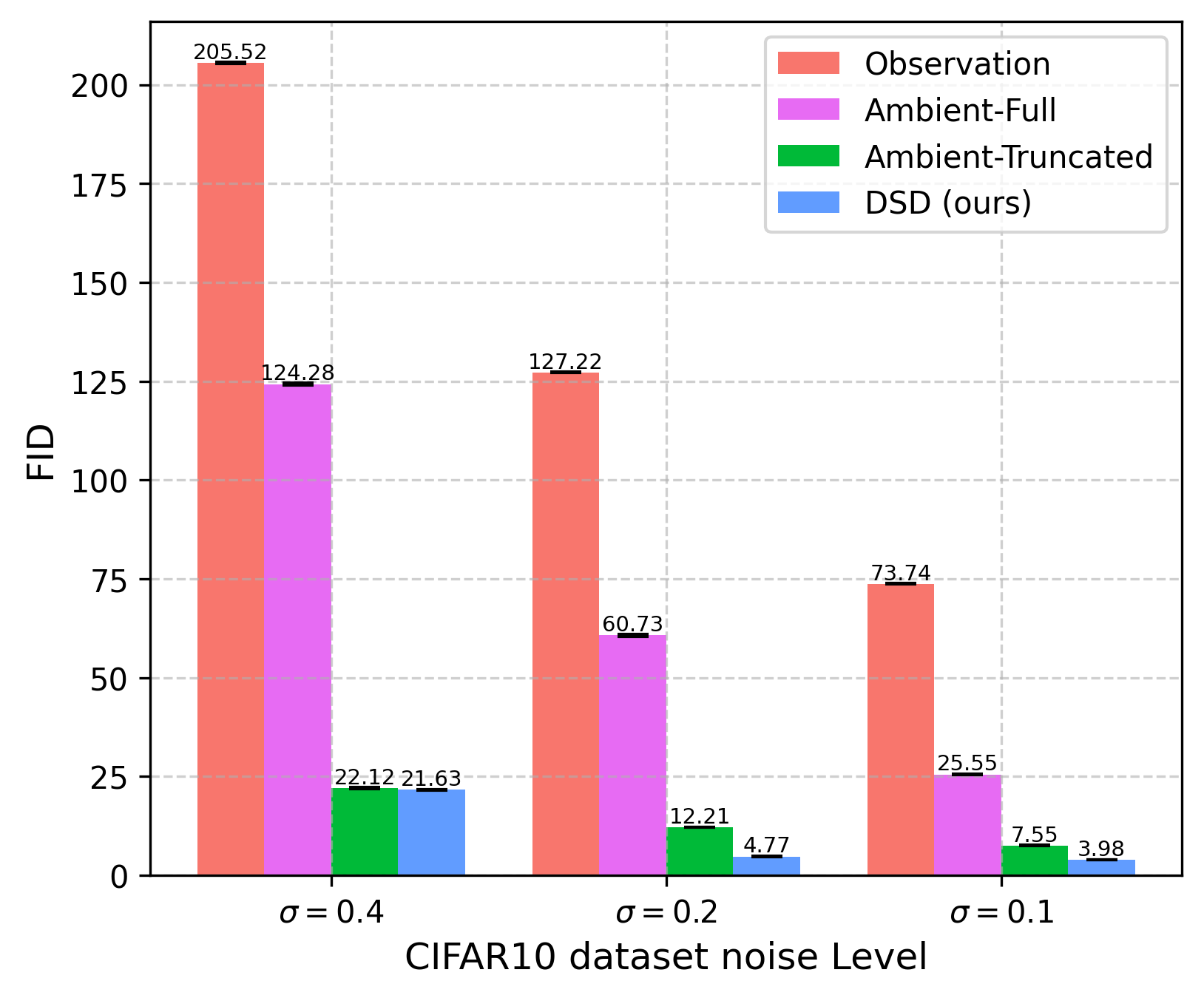}
        \caption{Evaluation of FID on CIFAR-10 across different noise levels.}
        \label{fig:fid_noise}
    \end{subfigure}
    \hfill
    \begin{subfigure}{0.32\textwidth}
        \centering
        \includegraphics[width=\linewidth]{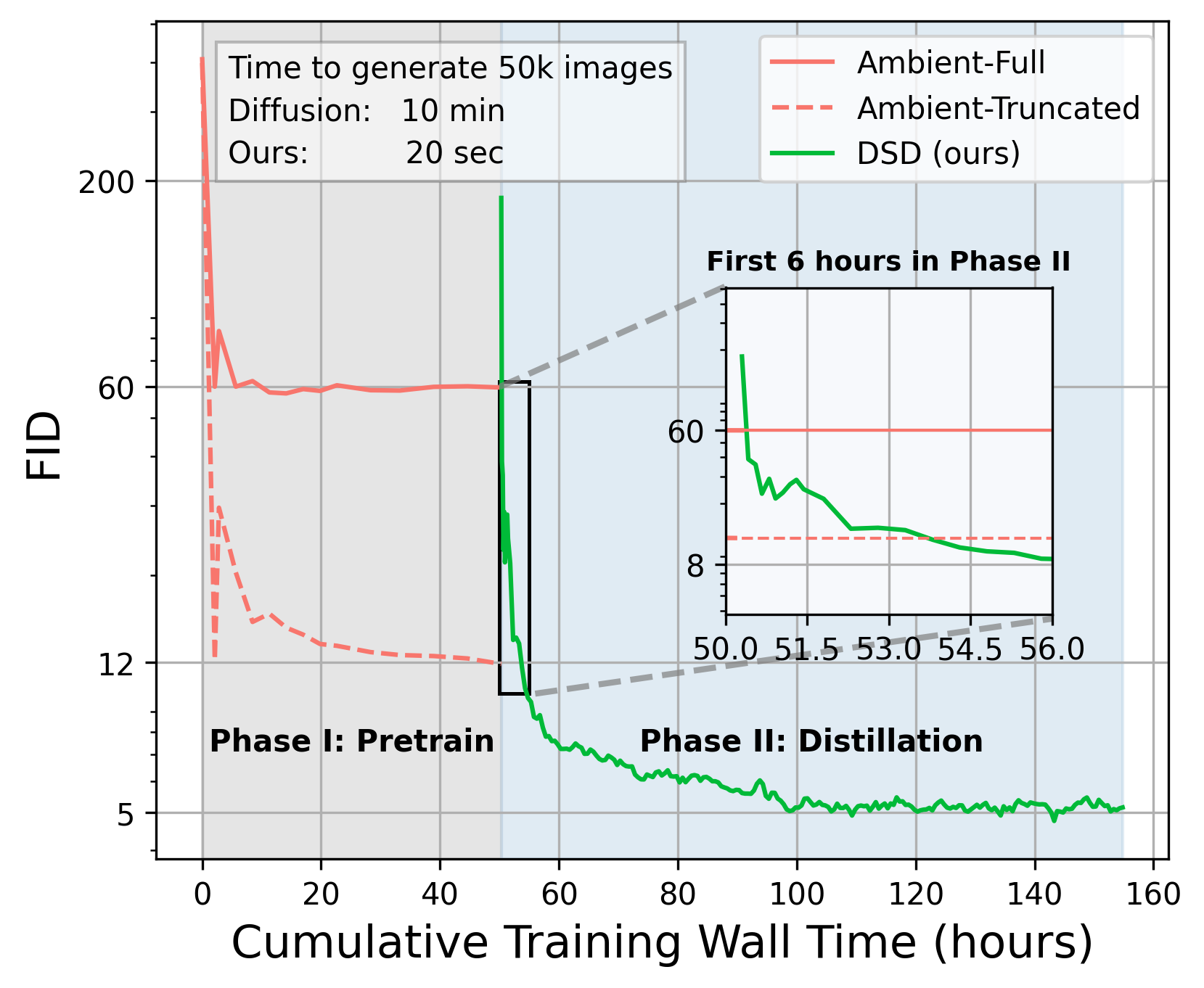}
        \caption{FID curve. Distillation within 4 hours surpasses teacher diffusion.}
        \label{fig:fid_pretrain_distill}
    \end{subfigure}
    \hfill
    \begin{subfigure}{0.32\textwidth}
        \centering
        \includegraphics[width=\linewidth]{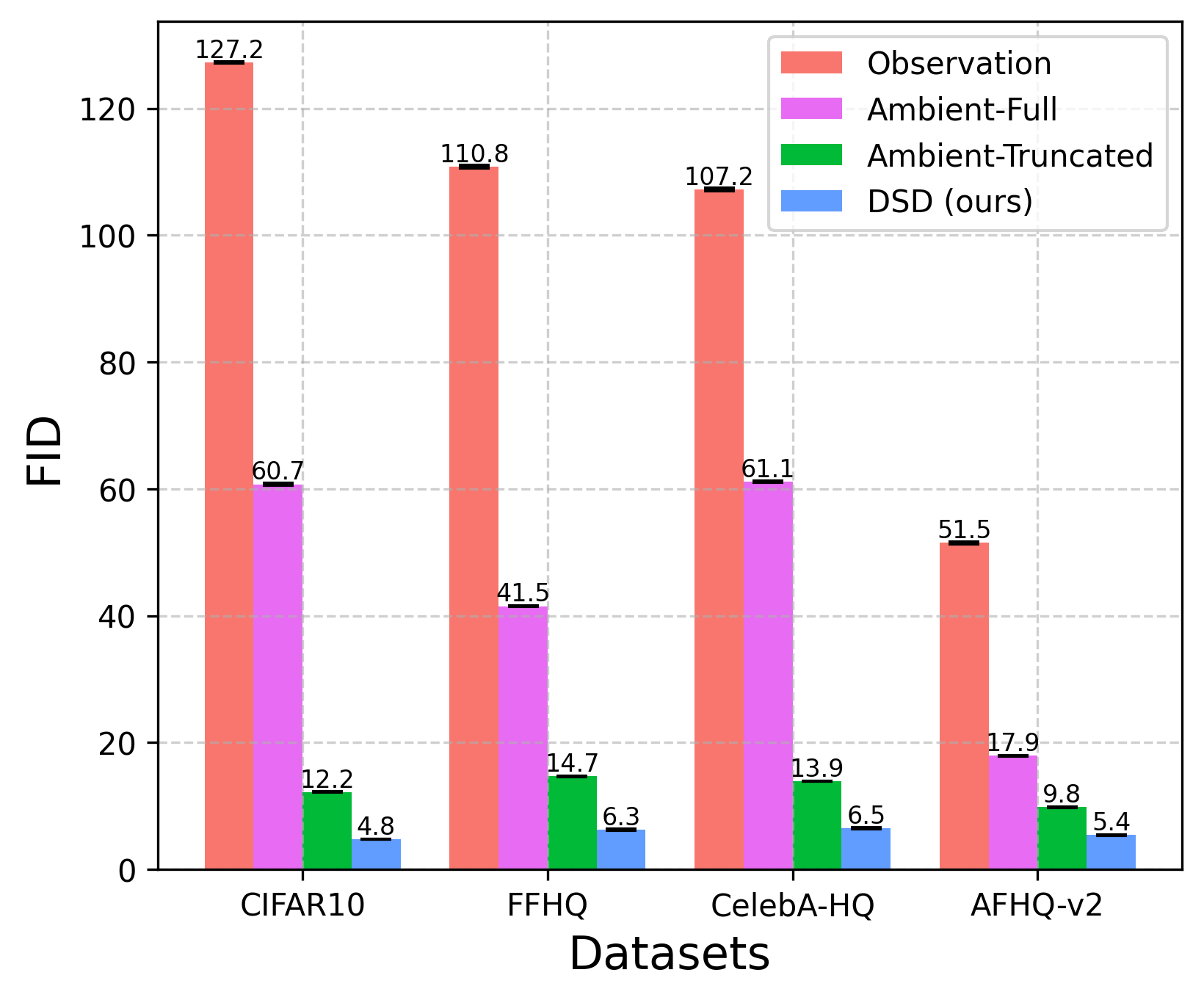}
        \caption{Evaluation of FID across different training datasets.}
        \label{fig:fid_datasets}
    \end{subfigure}
    \caption{ 
\textbf{Distilled student models (\textcolor{ours}{DSD, one-step}) surpass teacher diffusion models (\textcolor{full}{Ambient-Full} and \textcolor{trun}{Ambient-Truncated}) on FID in the following settings:}  
(a) Varying noise levels on CIFAR-10, and (c)  The same noise level across various datasets, including CIFAR-10, FFHQ, CelebA-HQ, and AFHQ-v2. For example, when $\sigma=0.2$, ours improves the FID from \textbf{12.21} to \textbf{4.77} on CIFAR-10. In addition,  
 (b) distillation within \textbf{4 hours} surpasses the teacher diffusion model.  
 Furthermore, distillation enjoys high inference efficiency and accelerates the generation of {50k images} from \textbf{10 minutes} to \textbf{20 seconds}, achieving a \textbf{30$\times$ speedup}.}

    \label{fig:first_plot}
\end{figure}

\end{abstract}    
\section{Introduction}
\label{sec:intro}

\begin{figure}[t]
    \centering
\includegraphics[width=1\linewidth]{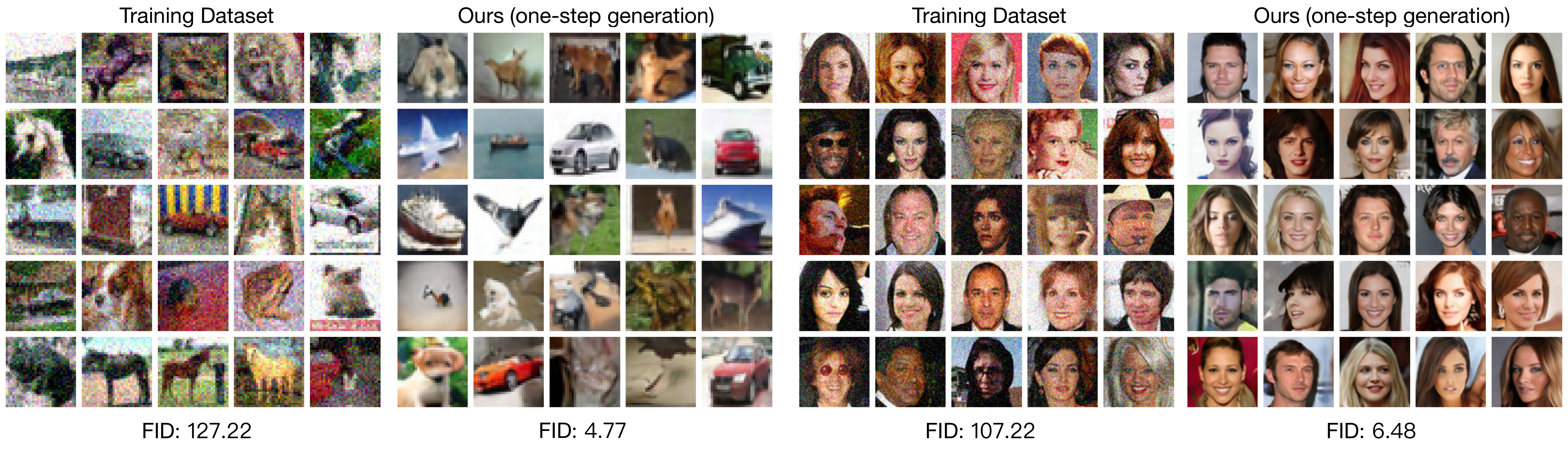}
    \caption{\textbf{Qualitative results of DSD (ours, one-step) at $\sigma=0.2$.} While only corrupted images are available during training, DSD is capable of producing refined, clean samples. The left two panels are from CIFAR-10, while the right two are from CelebA-HQ.  Zoom in for better viewing.}
    \label{fig:qual_results}
\end{figure}

Diffusion models \cite{sohl2015deep, ho2020denoising}, also known as score-based generative models \cite{song2019generative, song2021scorebased}, have emerged as the de facto approach for generating high-dimensional continuous data, particularly high-resolution images \cite{dhariwal2021diffusion, ho2022cascaded, ramesh2022hierarchical, rombach2022high, saharia2022photorealistic, peebles2023scalable, zheng2024learning, zhang2024object, chang2025skews}. These models iteratively refine random noise through diffusion processes, effectively capturing the complex distributions of their training data.

However, their performance is highly dependent on large-scale, high-quality datasets such as ImageNet \cite{deng2009imagenet}, LAION-5B \cite{schuhmann2022laion}, and DataComp \cite{gadre2023datacomp}. Constructing such datasets is an expensive and complex process \cite{gadre2023datacomp, baldridge2024imagen}. Moreover, this reliance on pristine data limits the applicability of diffusion models in scientific domains where clean data is scarce or costly to obtain, such as astronomy \cite{roddier1988interferometric, lin2024imaging}, medical imaging \cite{reed2021dynamic, jalal2021robust}, and seismology \cite{nolet2008breviary, rawlinson2014seismic}. For instance, in black-hole imaging, it is inherently impossible to obtain full measurements of the object of interest \cite{sun2021deep, leong2023discovering, lin2024imaging}. Additionally, training directly on original datasets containing private or copyrighted content, such as facial images may lead to ethical and legal issues \cite{carlini2023extracting, somepalli2023understanding, daras2023ambient}.  

To address these challenges, there has been growing interest in training generative models under corruption, where the available data is blurry, noisy, or incomplete \cite{bora2018ambientgan, kelkar2024ambientflow}. One classical approach leverages Stein’s Unbiased Risk Estimate (SURE) \cite{stein1981estimation-sure} to jointly learn an image denoiser and a diffusion model \cite{aali2023solving-sure, kawar2024gsurebased}. Another line of work explores {Ambient Diffusion} \cite{daras2023ambientdiffusion} and Ambient Tweedie~\cite{daras2025ambientscaling}, which train diffusion models from certain linear measurements. A different approach, EM-Diffusion \cite{weiminbai2024emdiffusion}, is based on the Expectation-Maximization (EM) algorithm \cite{dempster1977maximum},  alternating between reconstructing clean images from corrupted data using a known diffusion model and refining model weights based on these reconstructions. However, EM-Diffusion requires clean images initialization to effectively guide the learning process, which may not always be available in practical scenarios. Similarly, SFBD \cite{lu2025stochastic} frames the task of estimating the clean distribution as a density deconvolution problem \cite{meister2009deconvolution}, achieving decent performance with the help of clean data initialization.

In this work, we propose a surprisingly effective and novel approach for training high-quality generative models from low-quality data:  {denoising score distillation (DSD)}. Our method first pretrains diffusion models solely on noisy, corrupted data and then distills them into a one-step generator capable of producing refined, clean samples. While diffusion models typically suffer from the inefficiency of multi-step sampling, recent efforts have sought to accelerate them through advanced numerical solvers for stochastic and ordinary differential equations (SDE/ODE) \cite{song2020denoising, lu2022dpm, karras2022elucidating, liupseudo, zhao2024unipc, lu2022dpmplus} and distillation techniques \cite{song2023consistency, zhou2024score, yin2024one, salimans2022progressive, poole2022dreamfusion, xie2024em}. A prevailing view is that distillation primarily serves as a means to accelerate diffusion generation with minimal loss of output quality. However, as evidenced in Figs. \ref{fig:first_plot} and \ref{fig:qual_results}, we challenge this assumption by demonstrating that \textbf{score distillation \cite{poole2022dreamfusion, wang2024prolificdreamer, luo2023diff, yin2024one, zhou2024score, xie2024em} can, in fact,  {enhance} sample quality, particularly when the teacher model is trained on degraded data.} Our results suggest that  {{noisy data, when leveraged effectively through score distillation, can be more valuable than traditionally assumed}.}

To explain this phenomenon, we provide a theoretical analysis in Sec. \ref{sec:theory}, showing that, in a linear model setting, a distilled student model learns superior representations by aligning with the eigenspace of the underlying clean data distribution’s covariance matrix when given access to the noisy distribution's score. This insight reframes score distillation not only as an acceleration tool but also as a theoretically grounded mechanism for improving generative models trained on noisy data.
\textbf{We hope that our method, along with its theoretical framework, will inspire further research into leveraging distillation  for training generative models from corrupted data.}  

Our key contributions can be summarized as follows:

\begin{itemize}
    \item \textbf{Denoising Score Distillation for Learning from Noisy Data:} We introduce a novel training paradigm, DSD, which enables high-quality generative modeling from low-quality, noisy data by leveraging score distillation. Our approach highlights the potential of distillation in scenarios where clean data is scarce or unavailable.
   
\item \textbf{Empirical Evidence of Quality and Efficiency Enhancement:} We provide comprehensive experiments demonstrating that score distillation, contrary to conventional expectations, can significantly improve sample quality when applied to degraded teacher models. Quantitative results are shown in Fig. \ref{fig:first_plot} and Tab. \ref{tab:main_table}, while qualitative results are in Fig. \ref{fig:qual_results} and the Appendix. Besides, our method enjoys both training and inference efficiency as shown in Fig. \ref{fig:first_plot}(b) and Tab. \ref{tab:efficiency}. 

\item 
\textbf{Theoretical Justification for Student Model Superiority}: We provide a theoretical analysis in Sec. \ref{sec:theory} showing that in a linear model setting, a distilled student model can surpass a low-quality teacher by better capturing the eigenspace of the clean data distribution's covariance matrix, even when only given access to the noisy distribution's score. This insight offers a principled explanation for the observed improvements and establishes a new perspective on the role of distillation in generative learning.
\end{itemize}

\section{Background}

\subsection{Diffusion Models}

Diffusion models \cite{sohl2015deep, ho2020denoising}, also known as score-based generative models \cite{song2019generative, song2021scorebased}, consist of a forward
process that gradually injects noise to the data distribution and a reverse process that progressively
denoises the observations to recover the original data distribution $p_X(x)$. This results in a sequence
of noise levels $t \in (0,1]$ with conditional distributions $q_t(x_t | x) = \mathcal{N} (\alpha_t x, \sigma_t^2 I)$, whose marginals
are $q_t(x_t)$. We use a variance-exploding \cite{song2019generative} forward process such that $\alpha_t=1$ for simplicity, i.e., $x_t = x+\sigma_t \epsilon$ and $\epsilon \sim \cN(0,I_d)$. To learn the reverse diffusion process, extensive works \cite{karras2022elucidating} have considered training a time-dependent denoising autoencoder (DAE) 
$f_{\phi}(\cdot, t) : \mathbb{R}^d \times [0,1] \to \mathbb{R}^d$ \cite{vincent2011connection}
parameterized by a neural network with parameters $\phi$ to estimate the posterior mean 
$\mathbb{E} [x | x_t]$. To determine the parameters $\phi$, we can minimize the following empirical loss:
\begin{align}\label{eq:diffusion}
  \ell(\phi; \{x^{(i)}\}_{i=1}^N) := \frac{1}{N} \sum_{i=1}^{N} \int_{0}^{1} \mathbb{E}_{\epsilon \sim \mathcal{N}(0, I_d)}
\left[ \left\| f_{\phi}(x^{(i)} + \sigma_t \epsilon, t) - x^{(i)} \right\|^2_2 \right] dt,
\end{align}
where $\{x^{(i)}\}_{i=1}^N$ are $N$ observed data points, with $t$-dependent weighting functions omitted for~brevity.

\subsection{Score Distillation Methods}
Score distillation  initially emerged in 3D tasks \cite{poole2022dreamfusion, wang2024prolificdreamer} before being adapted for 2D image generation \cite{luo2023diff, yin2024one,  zhou2024score,  xie2024em}. This approach aims to compress a pretrained diffusion model into a one-step generator \( G_\theta: \mathbb{R}^d \to \mathbb{R}^d \). 
The generator is optimized to ensure that its induced distribution $(G_\theta)_{\sharp}(\cN(0,I_d))$\footnote{$G_{\sharp}(P)$ is the push-forward distribution of $P$ induced by a function $G$, i.e., $x \sim G_{\sharp}(P)$ if and only if $x = G(z),\ z\sim P.$} closely matches that of the pretrained teacher diffusion model, parametrized as $p_{\phi,t}(x_t)$, across all noise levels. Practically, \cite{luo2023diff, yin2024one,  zhou2024score,  xie2024em} include a fake diffusion $f_\psi(\cdot,t) : \mathbb{R}^d \times [0,1] \to \mathbb{R}^d$ to learn the distribution of the one-step generation at each noise level, denoted as $p_{\psi,t}(x_t)$. 
Given a fixed teacher  $\phi$, this alignment is achieved by minimizing the following objective:
\begin{align}
     \cJ (\theta;  \psi) = \mathbb{E}_{z\sim \cN(0,I_d), ~x = G_\theta(z)} \left [\int_{0}^1     \cD(p_{\psi,t} (x_t), p_{\phi, t} (x_t))   dt\right ], 
  \label{eq:distillation}
\end{align}
where $\mathcal{D}$ represents a divergence measure. We omit the weighting functions at different times $t$
 for brevity. Note that   the training objective of the fake diffusion $f_\psi$ is identical to training the teacher diffusion model as in Eq. \eqref{eq:diffusion}, except that the data comes from the generator $G_\theta$ rather than the dataset. In other words, it can be denoted as  
$\mathcal{J}(\psi;\theta) = \ell(\psi; \tilde{x})$ ,
  where $\tilde x \sim (G_\theta)_{\sharp}(\cN(0,I_d))$.

 Different score distillation approaches employ distinct choices of $\mathcal{D}$: for   Variational Score Distillation (VSD) \cite{wang2024prolificdreamer}, Diff-Instruct \cite{luo2023diff} and Distribution Matching Distillation (DMD) \cite{yin2024one}, $\mathcal{D}$ corresponds to the Kullback-Leibler (KL) divergence, whereas for Score identity Distillation (SiD) \cite{zhou2024score}, it is given by the Fisher divergence.  Note that the idea of distribution matching performed in the noisy space at multiple different noise levels aligns with Diffusion-GAN \cite{wang2022diffusion} where the Jensen-Shannon divergence is used.

 


\subsection{Ambient Tweedie}

Assume that we only have access to  noisy image samples $y = x + \sigma\epsilon 
\sim p_Y$ at a specific noise level $t_\sigma \in (0,1]$. Ambient Tweedie \cite{daras2024ambienttweedie} provides a method for learning an unbiased score for clean data from noisy data by utilizing Tweedie's formula \cite{efron2011tweedie}. Note that  learning a diffusion model $f_\phi$ is equivalent to learning the score function $\nabla_x \log p_X(x)$ at different time steps \cite{song2021scorebased}. As a result, instead of minimizing Eq. \eqref{eq:diffusion}, one can minimize
\begin{align}
   \ell_{\text{tweedie}}(\phi;\{y^{(i)}\}_{i=1}^N ) = \frac{1}{N} \sum_{i=1}^{N} \int_{t_\sigma}^{1}    \mathbb{E}_{\epsilon \sim \cN(0,I_d)}
\left\| \frac{\sigma_t^2 -\sigma^2}{\sigma_t^2 } f_\phi(x_t^{(i)}, t) + \frac{\sigma^2}{\sigma_t^2}  x_t^{(i)} - y^{(i)}\right\|^2dt,  
\label{eq:ambient_diffusion}
\end{align}
where $\{y^{(i)}\}_{i=1}^N$ are the observed $N$ noisy data points, and $x_t^{(i)} = y^{(i)} + \sqrt{\sigma_t^2 - \sigma^2} \epsilon$.  The loss can be seen as an adjusted diffusion objective with adaptation to noisy datasets at noise level $t_\sigma$.


\section{Denoising Score Distillation}
\label{sec:method}

\paragraph{Problem Statement.} Suppose we have a fixed corrupted, noisy dataset of size $N$, i.e. $\{y^{(i)}\}_{i=1}^N$. Assume that for each data point, $y^{(i)} = x^{(i)} + \sigma\epsilon^{(i)}$, where $\sigma$ is a known noise level and $\epsilon^{(i)} \overset{\text{i.i.d.}}{\sim} \mathcal{N}(0, I_d)$. Note that during training, we do not have access to the clean data $\{x^{(i)}\}_{i=1}^N$. 
The goal of our method is to recover the underlying clean distribution $p_X$  by learning a generator $G_\theta(\cdot)$.

\begin{algorithm}[t]
\caption{Denoising Score Distillation (DSD)}
\label{alg:ambient_distill}
\begin{algorithmic}[1]
\Procedure{Denoising-Score-Distillation}{$\{y^{(i)}\}_{i=1}^N, \sigma, K$}
\State \textbf{\textit{\# Phase I: Denoising Pretraining}}
    \State Pretrain $f_\phi$ using    \eqref{eq:diffusion} or \eqref{eq:ambient_diffusion} with the noisy training dataset $\{y^{(i)}\}_{i=1}^N$
    \State \textbf{\textit{\# Phase II: Denoising Distillation}}
    \State Initialize  fake diffusion model $f_\psi \leftarrow f_\phi$, one-step generator $G_\theta \leftarrow f_\phi$
    \For{$j = 1, \dots, K$}
        \State $x_g \leftarrow G_\theta(z), \quad z \sim \mathcal{N}(0, I_d)$ 
        \State $\tilde{y} \leftarrow x_g + \sigma \epsilon, \quad \epsilon \sim \mathcal{N}(0, I_d)$ \Comment{Corrupt fake image $x_g$.}
        \State Update  $f_\psi$ via a gradient step using     \eqref{eq:diffusion} or \eqref{eq:ambient_diffusion} with $\tilde y$
        \State $x_g \leftarrow G_\theta(z), \quad z \sim \mathcal{N}(0, I_d)$ 
        \State $\tilde{y} \leftarrow x_g + \sigma \epsilon, \quad \epsilon \sim \mathcal{N}(0, I_d)$ \Comment{Corrupt fake image $x_g$.}
        \State Update $G_\theta$ via a gradient step using the estimated generator loss \eqref{eq:distillation} with $\tilde y$ or $x_g$
    \EndFor
\EndProcedure
\end{algorithmic}
\begin{algorithmic}[1]
\Procedure{Denoising-Score-Distillation-Generation}{}
\State \textbf{\textit{\# One-Step Generation}}
\State $x_g=G_{\theta}(z),\quad z \sim \mathcal{N}(0, I_d)$
\EndProcedure
\end{algorithmic}
\end{algorithm}

Score distillation traditionally follows a two-phase approach: \textbf{Phase I}, where a diffusion model is pretrained on the training dataset, and \textbf{Phase II}, where the pretrained model is distilled into a one-step student generator. However, our experiments reveal that directly applying standard diffusion \eqref{eq:diffusion} and score distillation \eqref{eq:distillation} to noisy data leads to suboptimal performance,  as shown in Fig. \ref{fig:ablation}. To address this challenge, we introduce essential modifications to both phases to accommodate our corrupted-data-only setting. Specifically, in \textbf{Phase I}, we aim to learn either the score of the noisy dataset or an adjusted score of clean data inferred from noisy observations. In \textbf{Phase II}, we explore both the standard diffusion loss and an adapted diffusion loss tailored to the characteristics of noisy data for training the fake diffusion model.  We summarize our approach in Algorithm~\ref{alg:ambient_distill}, with further details discussed below.

\paragraph{Phase I: Denoising Pretraining}  
In this phase, we train a diffusion model \( f_\phi(\cdot, t) \) using the noisy training dataset \( \{y^{(i)}\}_{i=1}^N \), which serves as the teacher model for distillation. There are two possible approaches for training the diffusion model, depending on different perspectives:  

\begin{enumerate}  
    \item Pretrain \( f_\phi \) using the standard diffusion objective \eqref{eq:diffusion} across noise levels in $(0,1]$. In this case, the model learns the score of the noisy dataset, i.e., \( \nabla_y \log p_Y(y) \).  
    \item Pretrain \( f_\phi \) using the adjusted diffusion objective  \eqref{eq:ambient_diffusion} across noise levels in $(t_\sigma,1]$, allowing it to learn an unbiased score of the clean data from the noisy dataset, i.e., \( \nabla_x \log p_X(x) \).  
\end{enumerate}  
Intuitively, there is no distinct advantage to either approach, as even with the adjusted loss \eqref{eq:ambient_diffusion}, the model cannot learn the score at a noise level below \( t_\sigma \). More details on training with \eqref{eq:ambient_diffusion} are provided in Algorithm \ref{alg:ambient_training} in Appendix \ref{app:alg}. However, the choice of pretraining method influences the fake diffusion and one-step generator objectives in the distillation phase, as discussed below.

\paragraph{Phase II: Denoising Distillation}  
The objective of the second phase is to distill the pretrained teacher diffusion model from \textbf{Phase I} into a one-step generator. During distillation, the generator \( G_\theta \) is trained to produce clean images. Beyond standard score distillation, our method further corrupts the generated samples into \(\tilde{y}\) in the same manner as the corruption of the training dataset, as illustrated in Line 8/11 of Algorithm \ref{alg:ambient_distill}. These corrupted samples are then used to train a fake diffusion model \( f_{\psi}(x_t, t) \) and a one-step generator $G_\theta(z)$.  

The training of \( f_{\psi} \) and $G_\theta$ depends on the pretraining approach used in \textbf{Phase I}. Specifically, we consider two scenarios:  
\begin{enumerate}  
    \item If \( f_\phi \) is pretrained using \eqref{eq:diffusion}, then \( f_\psi \) is trained with the standard diffusion objective \eqref{eq:diffusion} on the dataset \(\tilde{y}\) across all noise levels \( t \in (0,1] \).  Furthermore, $\tilde{y}$ is regarded as the generated sample by $G_\theta$ and is used to estimate the generator loss, i.e., $x_t = \tilde y + \sigma_t \epsilon$ in  \eqref{eq:distillation}.
    
    \item If \( f_\phi \) is pretrained using \eqref{eq:ambient_diffusion}, then \( f_\psi \) is trained with the adjusted diffusion objective \eqref{eq:ambient_diffusion} on \(\tilde{y}\) over noise levels \( t \in (t_\sigma,1] \). Furthermore, $x_g$ is the generated sample by $G_\theta$ and used to estimate the generator loss, i.e., $x_t = x_g + \sigma_t \epsilon$ in  \eqref{eq:distillation}.
\end{enumerate}  
Note that maintaining consistency between the pretrained diffusion model, the fake diffusion model, and the one-step generator training objectives is essential. We call the first choice the standard diffusion way and the second choice the adjusted diffusion way.

  \begin{wrapfigure}{l}{0.48\textwidth} 
    \centering
    \includegraphics[width=0.45\textwidth]{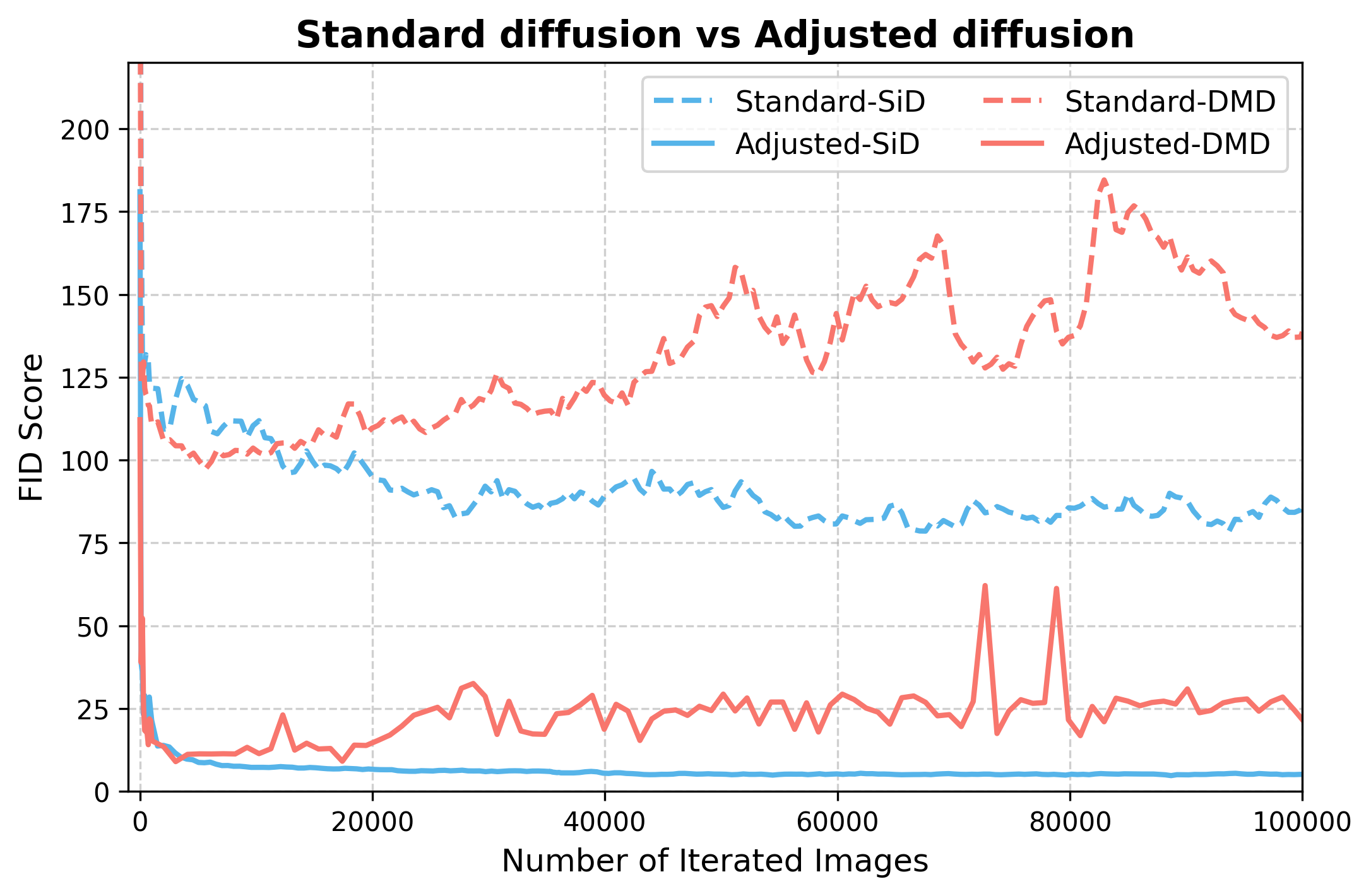} 
    \caption{\textbf{Ablation on diffusion objectives and generator losses.} The adjusted diffusion objective leads to excellent performance, while the Fisher divergence with SiD-based gradient estimation helps stabilize the distillation process. }
    \label{fig:ablation}
\end{wrapfigure}

In practice, for the optimization of the generator, the expectation in \eqref{eq:distillation} is estimated using sampled values of \( t \). Depending on the specific score distillation method employed, different divergence objectives  are selected accordingly, as formulated in  \eqref{eq:distillation}. A detailed loss formulation with specific gradient estimation methods is provided in Appendix \ref{app:distillation}.

In Sec. \ref{sec:experiment}, we present empirical distillation results evaluating different pretraining and fake score training objectives and generator losses in Tab. \ref{tab:cifar_results}. 
Our findings (Fig. \ref{fig:ablation}) consistently highlight that, across diverse datasets and noise levels, \textbf{integrating an adjusted diffusion learning objective with noise-aware adaptation, alongside selecting Fisher divergence as the generator loss and employing SiD-based gradient estimation, is crucial for stabilizing the distillation process and generating high-quality, clean outputs.}

\paragraph{{A Toy Example.}}
To intuitively understand why distillation works, we show a toy example in Fig. \ref{fig:toy}  to demonstrate the implicit regularization effects of distillation on the learned data distribution. 
The core issue with teacher diffusion models on noisy datasets, i.e., (b) Ambient-Full and (c) Ambient-Truncated, is that they force the approximating distribution to spread its probability mass across all regions where the target distribution has mass. 
In contrast, 
(d) DSD excels at denoising the original dataset, producing a narrow, concentrated, and sharp approximation. 

\section{Theory}\label{sec:theory}

For our theory, we aim to show that distilling a noisy teacher model can induce a distribution closer to the underlying clean data distribution. We will assume that our data follows a low-rank linear model for our analysis.

\begin{assumption}[Linear Low-Rank Data Distribution]
\label{assump:linear}
    Suppose our underlying data distribution is given by a low-rank linear model
$
x = Ez \sim p_X$ and $z \sim \mathcal{N}(0,I_r),
$
where \(E \in \mathbb{R}^{d \times r}\) with \(r < d\) and with orthonormal columns (i.e., \(E^TE=I_r\)). 
\end{assumption}

 Assumption \ref{assump:linear} is equivalent to $p_X := \mathcal{N}(0,EE^T)$.
For a fixed corruption noise level \(\sigma > 0\), consider the setting we only have access to the noisy distribution
$
y = x + \sigma \epsilon$, where $x \sim p_X$ and $\epsilon \sim \mathcal{N}(0,I_d)$. In other words, $p_{Y,\sigma} := \mathcal{N}(0,EE^T+\sigma^2 I_d).
$
In our setting we assume that we have perfectly learned the noisy score:

\begin{assumption}[Perfect Score Estimation]
\label{aasump:perfect_score}
    Suppose we can estimate the score function of corrupted data $y$ perfectly:
    $$
\nabla \log p_{Y,\sigma}(x) = -\bigl(EE^T+\sigma^2I_d\bigr)^{-1}x.
$$
\end{assumption}

Our goal is to distill this distribution into a distribution 
$
p_{G_\theta} := (G_\theta)_\sharp (\mathcal{N}(0,I_d))
$
given by the push-forward of \(\mathcal{N}(0,I_d)\) by a generative network \(G_\theta : \mathbb{R}^d \to \mathbb{R}^d\). To model a U-Net \cite{ronneberger2015u} style architecture with bottleneck structure, we assume $G_\theta$ satisfies the following low-rank linear structure detailed in Assumption \ref{assump:low_rank_generator}.

\begin{assumption}[Low-Rank Linear Generator]
    \label{assump:low_rank_generator}
    Assume the generator is a low-rank linear mapping, where \( G_\theta \) is parameterized by \( \theta = (U, V) \) where $U,V \in \mathbb{R}^{d \times r}$ with $r < d$ and has the form:
    $$
    G_\theta(z) := U V^T z.
    $$
\end{assumption}

 Note that $G_{\theta}$ induces a degenerate low-rank Gaussian distribution $p_{G_{\theta}} := \mathcal{N}(0,UV^TVU^T)$. Consider a bounded noise schedule $(\sigma_t) \subseteq [\sigma_{\min},\sigma_{\max}]$ for some $0 < \sigma_{\min} < \sigma_{\max} < \infty$ and perturbed data points $x_t = x + \sigma_t \epsilon$ where $\epsilon\sim\mathcal{N}(0,I_d)$ and $x\sim p_{G_{\theta}}$. Then $x_t \sim p^{\sigma_t}_{G_{\theta}} := \mathcal{N}(0,UV^TVU^T+\sigma_t^2I_d)$. To distill the noisy distribution, we minimize the score-based loss (or Fisher divergence) as in \cite{zhou2024score}:
\begin{equation}\label{eq:ideal-score-loss-r}
\mathcal{L}(\theta) := \mathbb{E}_{t \sim \mathrm{Unif}(0,1)}\mathbb{E}_{x_t \sim p^{\sigma_t}_{G_{\theta}}}\left[\left\|s_{\sigma,\sigma_t}(x_t) - \nabla \log p^{\sigma_t}_{G_{\theta}}(x_t) \right\|_2^2\right].
\end{equation} 

Here, $s_{\sigma,\sigma_t}(x):= -(EE^T + (\sigma^2 + \sigma_t^2)I_d)^{-1}x$. Note this objective is similar to Eq. \eqref{eq:distillation}, but with the real score in place of the fake score. This is also considered the idealized distillation loss (see Eq. (8) in \cite{zhou2024score}). In Theorem \ref{thm:time-dependent-theorem}, we show that minimizing Eq. \eqref{eq:ideal-score-loss-r} over a certain family of non-degenerate parameters finds a distilled distribution with \textbf{smaller} Wasserstein-2 distance to the underlying clean distribution. The formal proof is deferred to Appendix \ref{sec:appx-theorem-proof}.
\begin{theorem} \label{thm:time-dependent-theorem}
    Fix $\sigma > 0$. Under Assumptions \ref{assump:linear}, \ref{aasump:perfect_score}, and \ref{assump:low_rank_generator}, consider the family of parameters $\theta = (U, V)$ such that $$\theta \in \Theta := \{(U,V) : U^TU = I_r, V^TV \succ 0\}.$$ For any bounded noise schedule $(\sigma_t) \subseteq[\sigma_{\min},\sigma_{\max}]$, the global minimizers of $\mathcal{L}$ (Eq. \eqref{eq:ideal-score-loss-r}) over $\Theta$, denoted by $\theta^*_{\sigma} := (U^*,V^*_{\sigma})$, satisfy the following: \begin{align} 
    U^* = EQ\ \text{for some orthogonal matrix}\ Q\ \text{and}\ (V^*_{\sigma})^TV^*_{\sigma} = (1+\sigma^2)I_r. \label{eq:opt-representation}
    \end{align} For any such $\theta^*_{\sigma}$, the induced generator distribution $p_{G_{\theta^*_{\sigma}}} = \mathcal{N}(0,(1+\sigma^2)EE^T)$ satisfies $$W_2^2(p_{G_{\theta^*_{\sigma}}},p_X)
         = W_2^2(p_{Y,\sigma},p_X) - (d-r)\sigma^2 <  W_2^2(p_{Y,\sigma},p_X).$$
\end{theorem}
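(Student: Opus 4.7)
The plan is to rewrite the Fisher-type loss in Eq.~\eqref{eq:ideal-score-loss-r} entirely in terms of traces of the two Gaussian covariances $\Sigma_\theta := \sigma_t^2 I_d + U(V^TV)U^T$ and $\Sigma_* := EE^T + (\sigma^2+\sigma_t^2)I_d$, and then minimize the resulting expression in two stages --- first over $U$ with $W := V^TV$ held fixed, and then over $W$. Because both distributions are centered Gaussians with linear scores, the inner expectation collapses to
\begin{equation*}
\mathbb{E}_{x_t\sim\mathcal{N}(0,\Sigma_\theta)}\|\Sigma_\theta^{-1}x_t - \Sigma_*^{-1}x_t\|_2^2 \;=\; \mathrm{tr}(\Sigma_\theta^{-1}) \;-\; 2\,\mathrm{tr}(\Sigma_*^{-1}) \;+\; \mathrm{tr}(\Sigma_*^{-2}\Sigma_\theta),
\end{equation*}
in which the middle term is $\theta$-independent. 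The orthonormality conditions $U^TU=I_r$ and $E^TE=I_r$ make both $\Sigma_\theta$ and $\Sigma_*$ spectrally split into a rank-$r$ ``signal'' part and a scalar multiple of the complementary projector, so every remaining trace admits a closed form.

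\textbf{Stage 1 (optimize $U$):} Writing $\Sigma_*^{-2} = (1+\sigma^2+\sigma_t^2)^{-2}EE^T + (\sigma^2+\sigma_t^2)^{-2}(I_d-EE^T)$, the only piece of $\mathrm{tr}(\Sigma_*^{-2}\Sigma_\theta)$ that couples $U$ and $W$ is proportional to $\big[(1+\sigma^2+\sigma_t^2)^{-2} - (\sigma^2+\sigma_t^2)^{-2}\big]\,\mathrm{tr}(U^TEE^TU\,W)$ with strictly \emph{negative} coefficient. Since $U$ has orthonormal columns, $U^TEE^TU\preceq I_r$, so for any $W\succ 0$ we have $\mathrm{tr}(U^TEE^TU\,W)\le\mathrm{tr}(W)$ with equality if and only if $U^TEE^TU=I_r$, equivalently $U=EQ$ for some orthogonal $Q\in\mathbb{R}^{r\times r}$. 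This holds pointwise in $t$, hence also after integration in $t$.

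\textbf{Stage 2 (optimize $W$):} Substituting $U=EQ$, the $W$-dependent portion of the objective reduces, for each $t$, to $\sum_{i=1}^r\big[(\sigma_t^2+w_i)^{-1} + (1+\sigma^2+\sigma_t^2)^{-2}\,w_i\big]$, where $w_i$ are the eigenvalues of $W$. Each summand is strictly convex on $(0,\infty)$, and its first-order condition $(\sigma_t^2+w_i)^2=(1+\sigma^2+\sigma_t^2)^2$ yields the unique positive root $w_i=1+\sigma^2$, \emph{independent of $t$}. The pointwise minimizer is therefore a minimizer of the $t$-expectation, giving $(V_\sigma^*)^TV_\sigma^*=(1+\sigma^2)I_r$ and $p_{G_{\theta^*_\sigma}}=\mathcal{N}(0,(1+\sigma^2)EE^T)$.

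\textbf{Wasserstein comparison and main obstacle.} For the distance claim I would apply the Bures formula $W_2^2(\mathcal{N}(0,A),\mathcal{N}(0,B)) = \mathrm{tr}(A)+\mathrm{tr}(B)-2\,\mathrm{tr}((A^{1/2}BA^{1/2})^{1/2})$ with $A=EE^T$. Using $(EE^T)^2=EE^T$, one checks $(EE^T)^{1/2}\Sigma(EE^T)^{1/2}=(1+\sigma^2)EE^T$ for both $\Sigma=EE^T+\sigma^2I_d$ and $\Sigma=(1+\sigma^2)EE^T$, so the transport cross-term is identical in $W_2^2(p_{Y,\sigma},p_X)$ and $W_2^2(p_{G_{\theta^*_\sigma}},p_X)$; the difference collapses to $\mathrm{tr}(EE^T+\sigma^2I_d)-\mathrm{tr}((1+\sigma^2)EE^T)=(d-r)\sigma^2$, which is the claim. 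I expect the main technical hurdle to be Stage~1: producing the explicit closed-form expansion and isolating the coupling term, together with the tight spectral inequality $\mathrm{tr}(U^TEE^TU\,W)\le\mathrm{tr}(W)$ and its equality characterization, which is precisely what forces the learned column space $\mathrm{range}(U^*)$ to coincide with the signal subspace $\mathrm{range}(E)$.
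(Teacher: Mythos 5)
Your proposal is correct and follows essentially the same route as the paper's proof: the identical trace expansion $\mathrm{tr}(\Sigma_\theta^{-1}) - 2\,\mathrm{tr}(\Sigma_*^{-1}) + \mathrm{tr}(\Sigma_*^{-2}\Sigma_\theta)$, the same two-stage optimization (first forcing $U^* = EQ$ via the tight bound $\mathrm{tr}(U^TEE^TU\,W)\le\mathrm{tr}(W)$, then reducing to a strictly convex scalar function of the eigenvalues of $V^TV$ with minimizer $1+\sigma^2$), and the same closed-form Gaussian Wasserstein computation. The only cosmetic differences are that you bypass the von Neumann trace inequality by using $U^TEE^TU \preceq I_r$ directly, and you exploit the $t$-independence of the pointwise minimizer rather than analyzing the $t$-averaged function as in the paper's Lemma \ref{lem:eigenvalue-time-dependent-function}.
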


\begin{proof}[Proof sketch of Theorem \ref{thm:time-dependent-theorem}]
By directly computing the above expectation and using properties of the trace, one can show that there exists a constant $C_{\sigma,\sigma_t}$ independent of $\theta = (U,V) \in \Theta$ such that $$\mathcal{L}(\theta) = C_{\sigma,\sigma_t}+ B(U,V) + R(V),$$ where $$B(U,V) \propto - \mathrm{tr}(E^TUV^TVUE)$$ and $R(V)$ depends solely on the eigenvalues of $V^TV$. Here, $\propto$ refers to proportionality up to multiplicative constants that only depend on $\sigma,\sigma_t$. For any feasible $V$, minimizing $B(U,V)$ with respect to $U$ corresponds to maximizing the following quantity, which is akin to PCA: $$U \mapsto \mathrm{tr}(E^TUV^TVU^TE).$$ By exploiting the von Neumann trace inequality \cite{mirsky1975trace}, one can show that any maximizer of this quantity is of the form $U^* := EQ$ for some orthogonal matrix $Q$. Note that this maximizer does not depend on $V$. Plugging this back into $B(U^*,V)$ gives a quantity that only depends on the eigenvalues of $V^TV$. One can then show in order for $V^*_{\sigma}$ to minimize $B(U^*,V) + R(V)$, all eigenvalues of $(V^*_{\sigma})^TV^*_{\sigma}$ to be equal to $1+\sigma^2$. The Spectral Theorem guarantees that this implies $(V^*_{\sigma})^TV^*_{\sigma} = (1+\sigma^2)I_r.$ This completes the argument for $\theta^*_{\sigma}$ in Eq. \eqref{eq:opt-representation}. The final Wasserstein error bound is a direct computation.
\end{proof}


\section{Experiments} 
\label{sec:experiment}

In this section,  
we first present a toy example to display the implicit regularization effects brought by DSD, which will further be explained in Sec. \ref{sec:theory}. Then, extensive empirical evaluations on natural images are provided to validate the effectiveness of our proposed method. We conduct experiments on multiple datasets, including CIFAR-10 \cite{krizhevsky2009cifar}, FFHQ \cite{karras2019ffhq}, CelebA-HQ \cite{liu2015deepceleba, karras2017progressiveceleba}, and AFHQ-v2 \cite{choi2020afhq}, comparing our approach to various baselines in terms of sample quality, inference efficiency, and robustness to corruption levels.  Our results show that the distilled student models not only achieve faster inference but also consistently outperform their teacher models in terms of Fréchet Inception Distance (FID) \cite{heusel2017gans}, which quantifies the distance between the  distributions of generated samples  and clean data, as shown in Tabs. \ref{tab:cifar_results} and \ref{tab:3dataset}. Detailed implementation details are provided in Appendix \ref{app:implementation}.

\begin{figure}[t]
    \centering
    \begin{subfigure}{0.24\textwidth}
        \centering
        \includegraphics[width=\linewidth]{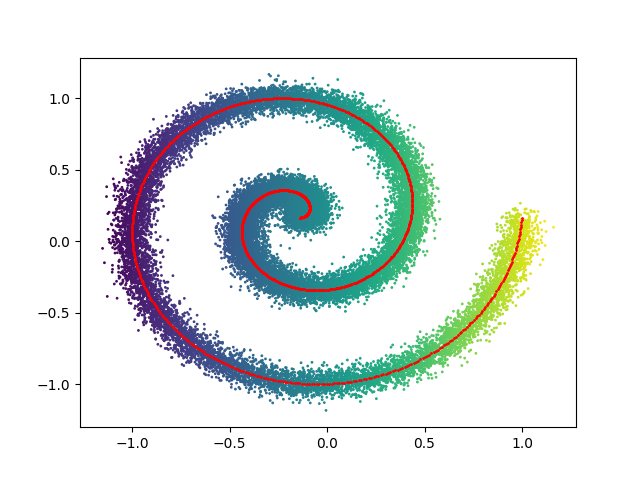}
        \caption{Noisy dataset $\sigma=0.05$}
        \label{fig:sub1}
    \end{subfigure}
    \begin{subfigure}{0.24\textwidth}
        \centering
        \includegraphics[width=\linewidth]{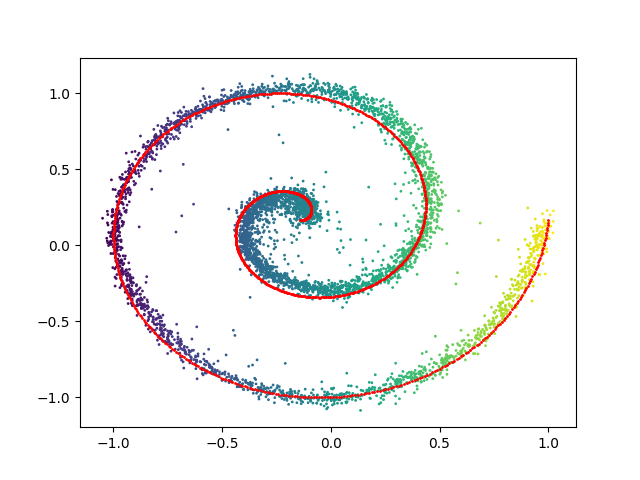}
        \caption{Ambient-Full}
        \label{fig:sub2}
    \end{subfigure}
    \begin{subfigure}{0.24\textwidth}
        \centering
        \includegraphics[width=\linewidth]{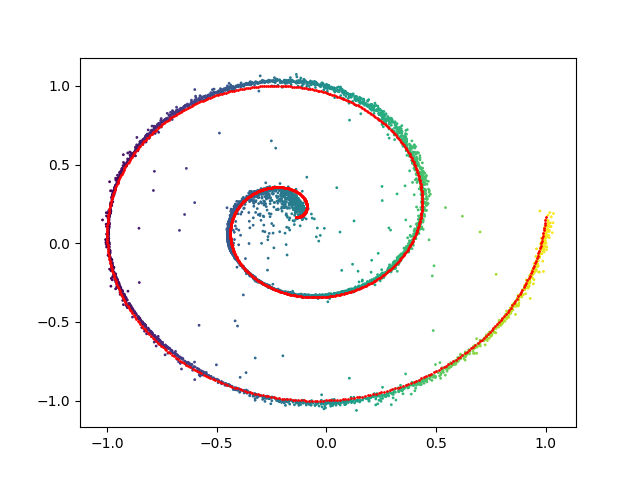}
        \caption{Ambient-Truncated}
        \label{fig:sub3}
    \end{subfigure}
    \begin{subfigure}{0.24\textwidth}
        \centering
        \includegraphics[width=\linewidth]{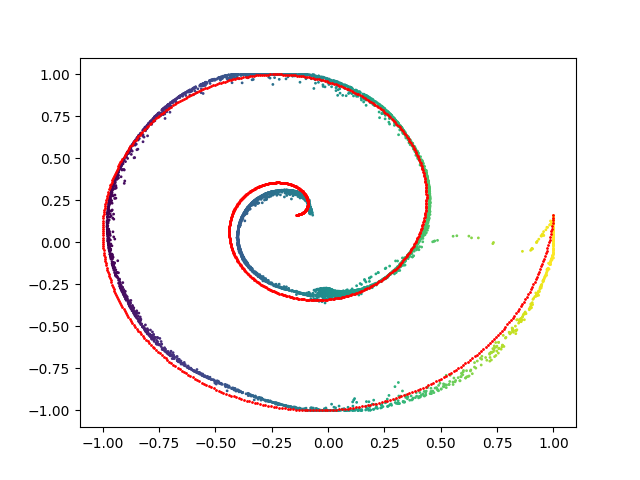}
        \caption{DSD (one-step)}
        \label{fig:sub4}
    \end{subfigure}
    \caption{\textbf{A toy example of learning from a noisy dataset with $\sigma=0.05$.}   Teacher diffusion models such as Ambient-Full and  Ambient Truncated tend to force the approximating distribution to spread out its probability mass to cover all regions. DSD excels at denoising the original dataset, demonstrating the implicit regularization effects brought by distillation.  }
    \label{fig:toy}
\end{figure}

\begin{table}[!t]
\centering
\caption{ 
\textbf{Results of our method DSD on CIFAR-10 and CelebA-HQ at noise level $\sigma=0.2$.} $\sigma=0.0$ refers to diffusion models trained on clean data, representing the upper-bound performance achievable by our method when trained exclusively on corrupted data. In contrast, $\sigma=0.2$ indicates models trained directly on noisy data. For methods employing few-shot initialization, 50 clean images are used. All baseline results are sourced from original papers or related works such as \cite{weiminbai2024emdiffusion} and \cite{lu2025stochastic}. Notably, the distilled student model DSD consistently outperforms its teacher diffusion models Ambient-Full and Ambient-Truncated, and even surpasses all baselines that rely on few-shot clean image initialization.
}
\label{tab:main_table}
\begin{tabular}{l|ccc|ccc}
\toprule
\multirow{2}{*}{\textbf{Methods}} & \multicolumn{3}{c|}{\textbf{CIFAR-10 (32$\times$32)}} & \multicolumn{3}{c}{\textbf{CelebA-HQ (64$\times$64)}} \\
\cmidrule(lr){2-4} \cmidrule(lr){5-7}
& $\boldsymbol{\sigma}$ & \textbf{Type} & \textbf{FID} & $\boldsymbol{\sigma}$ & \textbf{Type} & \textbf{FID} \\
\midrule
DDPM \cite{ho2020denoising}           & 0.0 & Full-Shot  & 4.04  & 0.0 & Full-Shot  & 3.26 \\
DDIM \cite{song2021ddim}         & 0.0 & Full-Shot  & 4.16  & 0.0 & Full-Shot  & 6.53 \\
EDM \cite{karras2022elucidating}         & 0.0 & Full-Shot  & {1.97} & -   & -   & -    \\ \midrule
SURE-Score \cite{aali2023solving-sure}   & 0.2 & Few-Shot & 132.61 & -   & -   & -    \\
Ambient Diffusion \cite{daras2023ambient}     & 0.2 & Few-Shot  & 114.13 & -   & -   & -    \\
EM-Diffusion \cite{weiminbai2024emdiffusion}        & 0.2 & Few-Shot & 86.47  & -   & -   & -    \\
TweedieDiff \cite{daras2024ambienttweedie} & 0.2 & Few-Shot & 65.21  & 0.2 & Few-Shot & 58.52 \\
{SFBD} \cite{lu2025stochastic}           & 0.2 & Few-Shot & {13.53} & 0.2 & Few-Shot & {6.49} \\ \midrule
TweedieDiff \cite{daras2024ambienttweedie} & 0.2 & Zero-Shot  & 167.23 & 0.2 & Zero-Shot  & 246.95 \\
Ambient-Full \cite{daras2025ambientscaling}& 0.2 & Zero-Shot & 60.73 & 0.2 & Zero-Shot & 61.14  \\
Ambient-Truncated \cite{daras2025ambientscaling}& 0.2 & Zero-Shot & 12.21 & 0.2 & Zero-Shot & 13.90 \\ 
Ambient-Consistency \cite{daras2025ambientscaling} & 0.2 & Zero-Shot & 11.93 & 0.2 & Zero-Shot & 12.97 \\     
DSD (Ours, One-Step) & 0.2 & Zero-Shot & \textbf{4.77} & 0.2 & Zero-Shot & \textbf{6.48} \\
\bottomrule
\end{tabular} 
\end{table}

\begin{table*}[t]
\centering
\begin{minipage}[t]{0.56\textwidth}
\centering
\caption{\textbf{Results of our methods (D-SDS, D-DMD, D-SiD) on CIFAR-10 at various noise levels.}    Note that the distilled student model D-SiD consistently surpasses the teacher diffusion models  {Ambient Full} and  {Ambient Truncated}.}
\label{tab:cifar_results}
\resizebox{\textwidth}{!}{
\begin{tabular}{l|c|ccc} 
\toprule
\textbf{Methods} & \textbf{Type} & \textbf{\boldmath$\sigma=0.1$} & \textbf{\boldmath$\sigma=0.2$} & \textbf{\boldmath$\sigma=0.4$} \\ 
\midrule 
Ambient-Full & Teacher & 25.55 & 60.73 & 124.28 \\
Ambient-Truncated & Teacher & 7.55 & 12.21 & 22.12 \\
\midrule
D-SDS & Distilled & > 200 & > 200 & > 200 \\
D-DMD & Distilled & 12.52 & 7.48 & 30.09 \\
D-SiD & Distilled & \textbf{3.98} & \textbf{4.77} & \textbf{21.63} \\
\bottomrule
\end{tabular}
}
\end{minipage}%
\hfill
\begin{minipage}[t]{0.42\textwidth}
\centering
\caption{\textbf{Results of our method DSD on FFHQ and AFHQ-v2 at $\sigma=0.2$.}   Our distilled model with only one-step generation surpasses the teacher diffusion models by a large margin across various datasets.}
\label{tab:3dataset}
\resizebox{ \textwidth}{!}{    \begin{tabular}{l|cc}
    \toprule
        \textbf{Methods} & \textbf{FFHQ} & \textbf{AFHQ-v2} \\ 
    \midrule
        Observation & 110.83 {\small$\pm$0.22} & 51.51 {\small$\pm$0.15} \\  
        Ambient-Full & 41.52 {\small$\pm$0.10} & 17.93 {\small$\pm$0.03} \\   
        Ambient-Truncated & 14.67 {\small$\pm$0.02} & 9.82 {\small$\pm$0.02} \\  
    \midrule
        DSD (one-step) & \textbf{6.29} {\small$\pm$0.15} & \textbf{5.42} {\small$\pm$0.08} \\ 
    \bottomrule
    \end{tabular}}
\end{minipage}
\end{table*}

\paragraph{Our Method.} As introduced in Sec. \ref{sec:method}, the framework of DSD involves 1) selecting the standard diffusion way or the adjusted diffusion way of training, and 2) selecting different objectives for the generator. In our initial experiments, we observed that pretraining and distilling with standard diffusion objectives led to suboptimal performance. As a result, we further experimented with the noise-aware adjusted diffusion loss \eqref{eq:ambient_diffusion} and found it crucial for excellent distillation performance. A sharp contrast in performance with different choices of diffusion objectives is demonstrated in Fig. \ref{fig:ablation}. Hereafter, we use the adjusted diffusion way by default.

For the generator loss, we explored three representative score distillation objectives: SDS, DMD, and SiD, denoted as {D-SDS}, {D-DMD}, and {D-SiD}, respectively.  D-SDS diverges at the initial stage of distillation, and its FID explodes quickly. The distillation process shown in Fig. \ref{fig:ablation} and the qualitative results in Tab. \ref{tab:cifar_results} indicate that D-SiD outperforms the other objectives in terms of both distillation stability and final performance metrics. Given its superior performance, we  refer to D-SiD as DSD for the remainder of our paper unless explicitly stated otherwise.

\paragraph{Performance Comparison with Baseline Methods.} We compare our method with various baselines as shown in Tab. \ref{tab:main_table}.   For brevity,   we categorize the  main baseline methods into three  groups: 1) Teacher Diffusion Models  \textbf{Ambient-Full} and \textbf{Ambient-Truncated } 
\cite{daras2025ambientscaling}: The teacher diffusion model trained with Ambient Tweedie loss \eqref{eq:ambient_diffusion} serves as a strong generative baseline, capable of producing clean images through reverse sampling.  We compare two sampling schemes (details are provided in Algorithm \ref{alg:ambient_sampling}):  
  a) \textbf{Ambient-Full}: This sampling scheme continues sampling until $t = 0$, adhering to the standard diffusion sampling  with a trained score function. 
b) \textbf{Ambient-Truncated}: This sampling scheme follows an early-stopping approach, where sampling terminates at $\sigma_t=\sigma$, where $\sigma$ is the predefined corruption level.  
\cite{daras2025ambientscaling} further introduces an additional consistency objective, termed \textbf{Ambient-Consistency}, to enhance the performance of trained models. 2)  Few-Shot Methods 
\textbf{EM-Diffusion} \cite{weiminbai2024emdiffusion} and \textbf{SFBD} \cite{meister2009deconvolution}:  EM-Diffusion   based on the Expectation-Maximization (EM) algorithm   alternates between 
reconstructing clean images from corrupted data using a known diffusion model via DPS \cite{chung2022diffusion} (E-step) and refining model 
weights based on these reconstruction (M-step). SFBD  frames the task of estimating the clean distribution as a density deconvolution 
problem.
Both methods require a small number of clean images as initialization to enable the training process. 
3) Standard Diffusion Models trained purely on clean data  \textbf{DDPM} \cite{ho2020denoising}, \textbf{DDIM} \cite{song2020denoising}, and \textbf{EDM} \cite{karras2022elucidating}: these three methods represent the upper-bound performance achievable by our method when trained exclusively on corrupted data. Our method, DSD, achieves the best performance among all baselines, including both zero-shot and few-shot methods, as shown in Tab. \ref{tab:main_table}.


\paragraph{Ablation on Datasets and Noise Levels.} Tab. \ref{tab:cifar_results} indicates that our method is effective and robust at all levels of noise, consistently surpassing teacher diffusion models. Tab.~\ref{tab:3dataset} presents the outstanding distillation results for more datasets on  FFHQ  and AFHQ-v2, while keeping the corruption level fixed at $\sigma=0.2$.

\begin{wraptable}{l}{0.5\linewidth}
 
\centering
\caption{Conditional inverse problem results of denoising on CIFAR10 at $\sigma=0.2$. Results for the baselines are taken from \cite{weiminbai2024emdiffusion}. We follow \cite{weiminbai2024emdiffusion} and sample 250 test images and compute the average PSNR and LPIPS. }
\resizebox{ 0.5\textwidth}{!}{\begin{tabular}{lccc }
\toprule
\textbf{Method} & \textbf{Type} & \textbf{PSNR$\uparrow$} & \textbf{LPIPS$\downarrow$} \\
\midrule
Observations         &      & 18.05 & 0.047  \\
DPS w/ clean prior \cite{chung2022diffusion}   & Full-Shot    & 25.91 & 0.010  \\ \midrule
SURE-Score \cite{aali2023solving-sure} &  Few-Shot   & 22.42 & 0.138  \\
AmbientDiffusion \cite{daras2023ambient}&  Few-Shot  & 21.37 & 0.033   \\
EM-Diffusion \cite{weiminbai2024emdiffusion}  &   Few-Shot        &  {23.16} &  {0.022}  \\ \midrule
Noise2Self \cite{batson2019noise2self} & Zero-Shot   & 21.32 & 0.227   \\ 
DSD (ours) & Zero-Shot   &   \textbf{24.11} &       \textbf{0.025}        \\ 
\bottomrule
\end{tabular}}
\label{tab:cifar10_denoising}
 \end{wraptable}

\paragraph{Solving Conditional Inverse Problems.}  
A promising future direction is to extend our framework into a conditional solver for inverse problems \cite{chung2022diffusion, zhang2024flow, zhu2024think, zhang2025learning},
 enabling applications in scientific and engineering domains. 
We provide a preliminary result of solving inverse problems with DSD in Tab. \ref{tab:cifar10_denoising}.  Our method surpasses previous zero-shot methods by a large margin and even achieves comparable performance with few-shot methods such as EM-Diffusion. In our implementation, we optimize $\min_z ||\cA (G_\theta(z)) - y||^2$ for 1000 steps using a learning rate of 0.05 with the Adam optimizer~\cite{kingma2014adam}.
We consider other methods to solve the inverse problems with one-step generators as future directions.

\paragraph{New Metric for Model Selection Criterion: Proximal FID.} In settings where only corrupted data is available, traditional FID metrics are unsuitable for model selection, as they measure the distance between clean and generated images. To address this, we propose Proximal FID, a new metric specifically designed for such scenarios.  
 The metric is computed as follows: we generate 50k images using the trained generator, corrupt them to match the noise level of the training dataset—yielding a batch of corrupted images, i.e.,  $
\{x_g^{(i)} + \sigma \epsilon^{(i)}\}_{i=1}^{50k},
$
and then calculate FID against the noisy training dataset. Our findings show that Proximal FID reliably selects models with ground-truth FID values near the optimal, as shown in Tab. \ref{tab:proximal}.  
We present the evolution of FID and Proximal FID results on CIFAR-10 in Fig. \ref{fig:proximal_fid} and other datasets in  Fig. \ref{fig:3dataset} in Appendix \ref{app:proximal}. 
In Tabs. \ref{tab:cifar_results} and Tabs. \ref{tab:3dataset}, we focus on providing a comparison on FID as this was the main metric used in previous baselines.
  We propose this metric for more realistic settings and hope to encourage the adoption of Proximal FID as a standard evaluation metric in the field of learning from corrupted data.

\begin{minipage}{0.48\textwidth} 
    \centering
\includegraphics[width=0.9\textwidth]{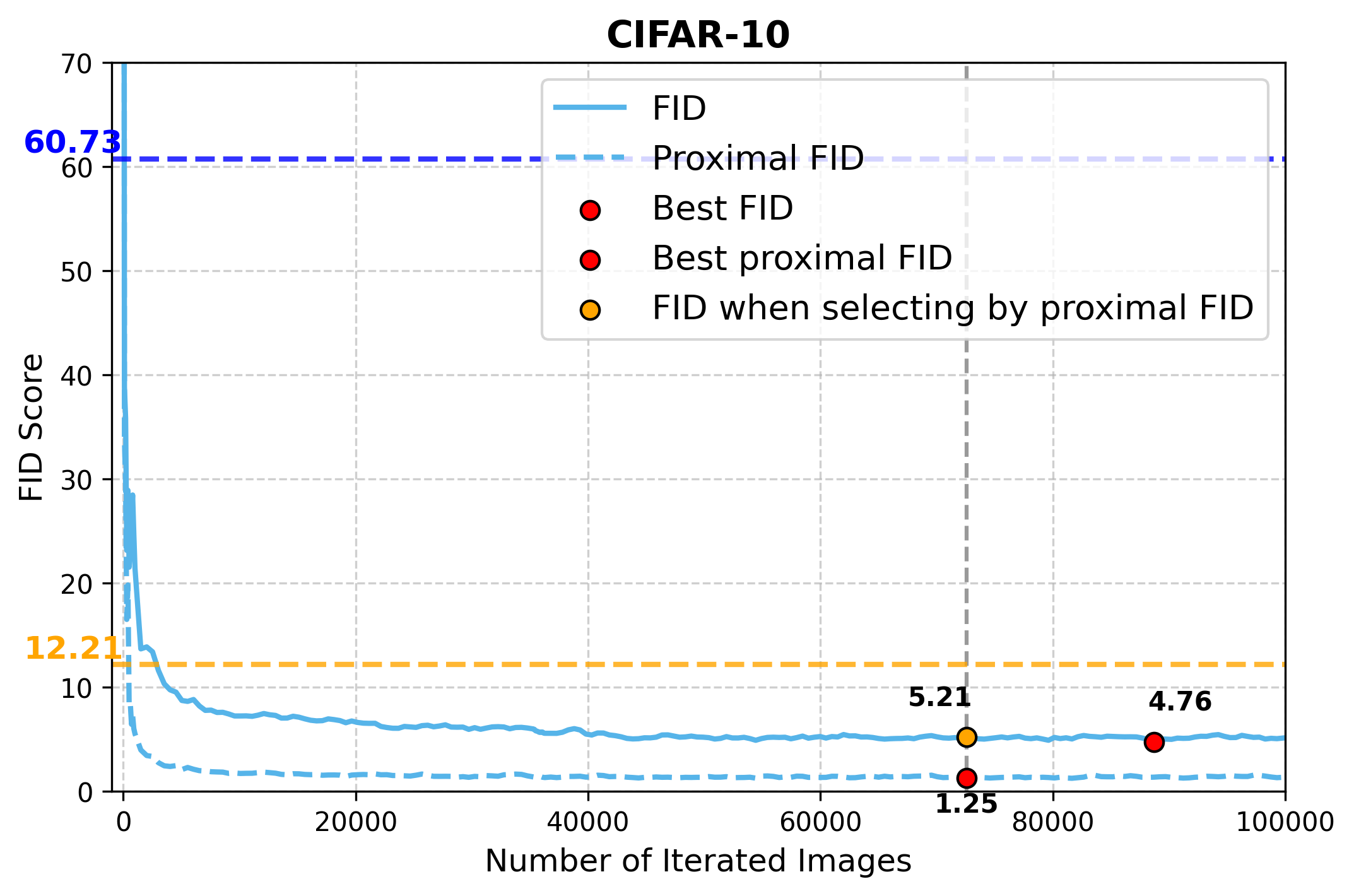}
    \captionof{figure}{\textbf{Evolution of FIDs and Proximal FIDs on D-SiD.} Proximal FID aligns well with FID throughout the distillation process.}
    \label{fig:proximal_fid}
\end{minipage}%
\hfill 
\begin{minipage}{0.48\textwidth} 
    \centering
    \captionof{table}{\textbf{The best FIDs selected by Proximal FID or FID of DSD.} Proximal FID serves as a reliable alternative to the true FID, consistently selecting models whose ground-truth FID is close to the best achievable FID.}
    \label{tab:proximal}
   \resizebox{ 0.8\textwidth}{!}{  \begin{tabular}{l|cc}
        \toprule
        \textbf{Datasets} & \textbf{Proximal FID} & \textbf{FID} \\ 
        \midrule
        CIFAR-10          & 5.21 \textcolor{red}{(+0.45)} & 4.76 \\
        FFHQ              & 6.12 \textcolor{red}{(+0.04)} & 6.08 \\
        CelebA-HQ         & 6.90 \textcolor{red}{(+0.54)} & 6.36 \\
        AFHQ-v2           & 5.45 \textcolor{red}{(+0.06)} & 5.39 \\   
        \bottomrule 
    \end{tabular}}
\end{minipage}

\section{Conclusion}
In this work, we introduced Denoising Score Distillation (DSD), a novel approach for training high-quality generative models from noisy, corrupted data using score distillation. 
Our empirical results demonstrate that DSD not only enhances sample fidelity across diverse datasets and noise levels, even under poor initial training conditions, but also achieves high training and inference efficiency. 
Furthermore, our theoretical analysis reveals that DSD implicitly regularizes the generator by identifying the eigenspace of the clean data distribution’s covariance matrix.
A detailed discussion of limitations is provided in Appendix \ref{app:limitation}.


{
    \small
    \bibliographystyle{abbrvnat}
    \bibliography{main}
}

\appendix
\clearpage
\setcounter{page}{1}
{\Large \textbf{Appendix}}
\section{Discussions and Limitations}\label{app:limitation}

\paragraph{Unknown Variance Size}  
In practical scenarios where only corrupted datasets are available, the true noise variance \(\sigma\) is often unknown. To address this challenge, we propose two potential solutions: (1) \textbf{Variance Estimation}, where \(\sigma\)  can be estimated  from a single image or an entire dataset \cite{donoho2002noising,zhou2009non,liu2013single,lebrun2013nonlocal,chen2015efficient},  and (2) \textbf{Hyperparameter Tuning}, where \(\sigma\) is treated as a tunable parameter optimized for the generative performance. In Fig. \ref{fig:sigma_estimate}, we present a toy example illustrating the impact of tuning 
$\sigma$. We use a noisy training dataset with $\sigma=0.05$. During pretraining and distillation, we experiment with different values of $\hat \sigma$, representing underestimation, accurate estimation, and overestimation. A slight overestimation of the noise level tends to increase regularization strength, helping the generated data better adhere to the data manifold. 


\begin{figure}[htbp]
    \centering
    \begin{subfigure}{0.32\textwidth}
        \centering
        \includegraphics[width=\linewidth]{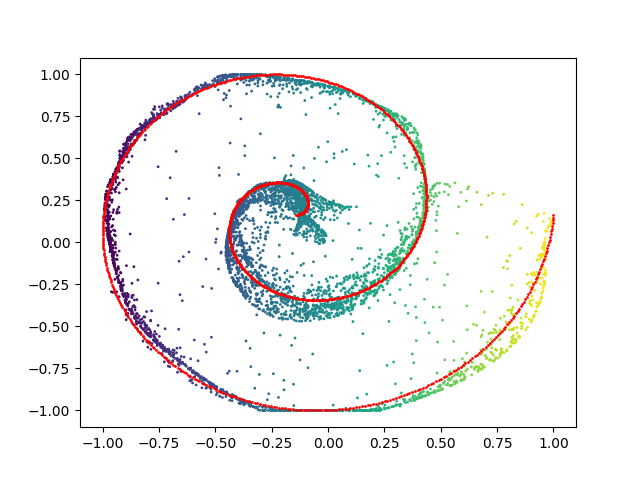}
        \caption{$\hat\sigma=0$}
    \end{subfigure}
    \begin{subfigure}{0.32\textwidth}
        \centering
        \includegraphics[width=\linewidth]{toy/4_epoch_292000_sigma_0.05_train_sigma_0.05.png}
        \caption{$\hat\sigma=0.05$}
    \end{subfigure}
    \begin{subfigure}{0.32\textwidth}
        \centering
        \includegraphics[width=\linewidth]{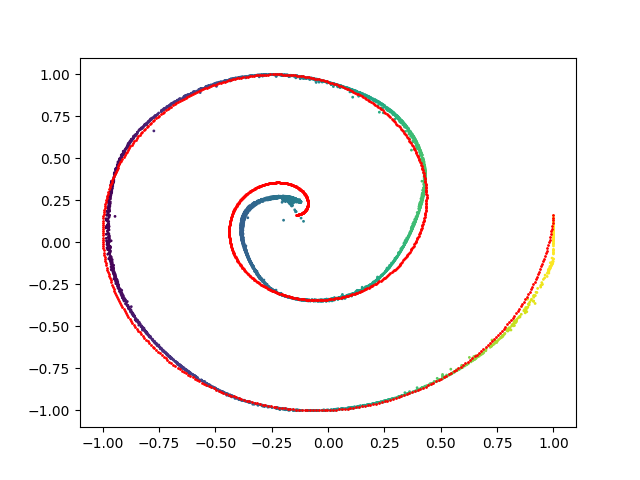}
        \caption{$\hat\sigma=0.1$}
    \end{subfigure}
    \caption{\textbf{A toy example illustrating the impact of tuning 
$\sigma$.} We use a noisy training dataset with $\sigma=0.05$. During pretraining and distillation, we experiment with different values of $\hat \sigma$, representing underestimation, accurate estimation, and overestimation. A slight overestimation of the noise level tends to increase regularization strength, helping the generated data better adhere to the data manifold. }
    \label{fig:sigma_estimate}
\end{figure}

\paragraph{Diverse Corruption Cases}  
Our study primarily focuses on settings where the corruption process involves adding noise. Extending our framework to handle a broader range of corruption operators—such as blurring and downscaling—is an important direction for future research. Addressing these more complex corruption processes could further enhance the general applicability of our method.

\paragraph{Applications in Scientific Discovery}  
Our proposed approach is particularly well-suited for scientific discovery applications, where access to clean observational data is inherently limited. Applying our method to scientific datasets is a promising avenue for future research.

\section{Proofs}

The   result shows that minimizing the Fisher divergence in Eq. \eqref{eq:ideal-score-loss-r} to distill the noisy teacher model induces a distribution that is closer in Wasserstein-2 distance to the clean underlying data distribution. When the support of the underlying data distribution has lower intrinsic dimension, the better our distilled distribution approximates the clean data distribution. We further note that this result focuses on the setting where the underlying generator has low-rank structure. While it is common to make simplifying assumptions on the network architecture to understand score-based models \cite{chen2023score, chen2024learning}, there is also recent work \cite{wang2024diffusion} that has shown when trained on data of low intrinsic dimensionality, score-based models can exhibit low-rank structures. Empirically, we find that neural-network-based distilled models can find such low-dimensional structures through noisy data. An interesting future direction of this work is to understand the influence of neural-network-based parameterizations of the score function along with analyzing the fake score setting. 


Before we dive into the proof, we provide the following lemmas.
\begin{lemma}\label{lemma1}
    [Generalized Woodbury Matrix Identity \cite{higham2002accuracy}]

Given an invertible square matrix \( A \in \mathbb{R}^{n \times n} \), along with matrices \( U \in \mathbb{R}^{n \times k} \) and \( V \in \mathbb{R}^{k \times n} \), define the perturbed matrix: $B = A + U V$.
If \( (I_k + V A^{-1} U) \) is invertible, then the inverse of \( B \) is given by:
\[
B^{-1} = A^{-1} - A^{-1} U (I_k + V A^{-1} U)^{-1} V A^{-1}.
\]
\end{lemma}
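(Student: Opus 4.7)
The plan is to verify the identity by direct multiplication, showing that the candidate expression on the right-hand side is both a left and right inverse of $B = A + UV$. Since $A$ is invertible and $(I_k + VA^{-1}U)$ is invertible by hypothesis, every matrix appearing in the proposed formula is well-defined, so it suffices to check that multiplication in either order yields the identity $I_n$.

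First, I would compute the product $B \cdot \bigl[A^{-1} - A^{-1}U(I_k + VA^{-1}U)^{-1}VA^{-1}\bigr]$. Substituting $B = A + UV$ and distributing gives two pieces: the piece $(A+UV)A^{-1} = I_n + UVA^{-1}$, and the piece $(A+UV)\,A^{-1}U(I_k + VA^{-1}U)^{-1}VA^{-1}$. The key algebraic step is to factor $U$ out on the left of the second piece: $(A+UV)A^{-1}U = U + UVA^{-1}U = U(I_k + VA^{-1}U)$. The factor $(I_k + VA^{-1}U)$ then cancels with its inverse, collapsing the second piece to $UVA^{-1}$. Subtracting yields $I_n + UVA^{-1} - UVA^{-1} = I_n$, as required.

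Next, I would verify the identity on the other side by computing $\bigl[A^{-1} - A^{-1}U(I_k + VA^{-1}U)^{-1}VA^{-1}\bigr]\cdot B$ by a symmetric argument: $A^{-1}(A+UV) = I_n + A^{-1}UV$, and the analogous factorization $VA^{-1}(A+UV) = V + VA^{-1}UV = (I_k + VA^{-1}U)V$ lets the middle inverse cancel, producing $A^{-1}UV$, which exactly cancels the leftover term. Since $B$ admits both a left and a right inverse equal to the claimed expression, it is invertible and $B^{-1}$ is given by that formula.

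There is no real obstacle here — the identity is a textbook calculation. The only subtlety worth flagging is the factoring trick $U(I_k + VA^{-1}U) = (A + UV)A^{-1}U$ (and its transpose-flavored counterpart), which is what makes the middle inverse telescope. I would state this factorization explicitly as the pivot of the argument, since without it the expansion looks opaque.
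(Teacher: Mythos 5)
Your verification is correct and complete: both the right- and left-inverse computations go through, the factorizations $(A+UV)A^{-1}U = U(I_k + VA^{-1}U)$ and $VA^{-1}(A+UV) = (I_k + VA^{-1}U)V$ are exactly the pivots that make the middle inverse cancel, and checking both sides also establishes invertibility of $B$, which the lemma implicitly asserts. The paper itself gives no proof of this lemma (it cites it as a standard identity from \cite{higham2002accuracy}), and your direct two-sided multiplication is the standard textbook argument, so there is nothing to reconcile.
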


\begin{lemma}\label{lemma2}
 The Wasserstein-2 distance between two mean-zero Gaussians $\cN(0,\Sigma_1)$ and $\cN(0,\Sigma_2)$ whose covariance matrices commute, i.e., $\Sigma_1\Sigma_2 = \Sigma_2\Sigma_1$, is given by \begin{align*}  W_2^2(\cN(0,\Sigma_1),\cN(0,\Sigma_2)) 
        = \sum_{i=1}^d \lambda_i(\Sigma_1) + \lambda_i(\Sigma_2) - 2\sqrt{\lambda_i(\Sigma_1)\lambda_i(\Sigma_2)}.
    \end{align*}
\end{lemma}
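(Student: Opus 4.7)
\medskip

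The plan is to start from the closed-form Bures--Wasserstein expression for the squared 2-Wasserstein distance between two Gaussians and then exploit commutativity to reduce it to a per-eigenvalue sum. Recall that for any two Gaussians $\cN(\mu_1,\Sigma_1)$ and $\cN(\mu_2,\Sigma_2)$ on $\mathbb{R}^d$, one has the well-known identity
\begin{equation*}
W_2^2(\cN(\mu_1,\Sigma_1),\cN(\mu_2,\Sigma_2)) \;=\; \|\mu_1-\mu_2\|_2^2 \;+\; \operatorname{tr}\!\Bigl(\Sigma_1+\Sigma_2 - 2\bigl(\Sigma_1^{1/2}\Sigma_2\Sigma_1^{1/2}\bigr)^{1/2}\Bigr).
\end{equation*}
Since the two means here are zero, the task reduces to simplifying $\operatorname{tr}\bigl((\Sigma_1^{1/2}\Sigma_2\Sigma_1^{1/2})^{1/2}\bigr)$ under the hypothesis $\Sigma_1\Sigma_2=\Sigma_2\Sigma_1$.

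Next, I would invoke the fact that two commuting symmetric positive semidefinite matrices are simultaneously diagonalizable by a common orthonormal basis. That is, there exists an orthogonal $Q\in\mathbb{R}^{d\times d}$ and diagonal matrices $D_1=\operatorname{diag}(\lambda_1(\Sigma_1),\dots,\lambda_d(\Sigma_1))$ and $D_2=\operatorname{diag}(\lambda_1(\Sigma_2),\dots,\lambda_d(\Sigma_2))$ (with eigenvalues paired by their common eigenvector) such that $\Sigma_i = QD_iQ^T$. Using this, $\Sigma_1^{1/2} = QD_1^{1/2}Q^T$ and a direct computation gives
\begin{equation*}
\Sigma_1^{1/2}\Sigma_2\Sigma_1^{1/2} \;=\; Q\,D_1^{1/2}D_2 D_1^{1/2}\,Q^T \;=\; Q\,(D_1D_2)\,Q^T,
\end{equation*}
so that $(\Sigma_1^{1/2}\Sigma_2\Sigma_1^{1/2})^{1/2} = Q(D_1D_2)^{1/2}Q^T$, whose diagonal entries are $\sqrt{\lambda_i(\Sigma_1)\lambda_i(\Sigma_2)}$.

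Combining everything via the trace (which equals the sum of eigenvalues and is invariant under conjugation by $Q$), I would conclude
\begin{equation*}
W_2^2(\cN(0,\Sigma_1),\cN(0,\Sigma_2)) \;=\; \sum_{i=1}^d \lambda_i(\Sigma_1) + \sum_{i=1}^d \lambda_i(\Sigma_2) - 2\sum_{i=1}^d \sqrt{\lambda_i(\Sigma_1)\lambda_i(\Sigma_2)},
\end{equation*}
which is precisely the claimed identity after grouping terms index by index.

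The main obstacle, and really the only subtle point, is justifying the indexing: the sum over $i$ on the right-hand side of the lemma implicitly pairs $\lambda_i(\Sigma_1)$ with $\lambda_i(\Sigma_2)$ according to the shared eigenbasis guaranteed by simultaneous diagonalization, rather than by some canonical (e.g., decreasing) ordering of eigenvalues of each matrix independently. I would spell this pairing out explicitly when invoking simultaneous diagonalization, since the Bures trace $\operatorname{tr}((\Sigma_1^{1/2}\Sigma_2\Sigma_1^{1/2})^{1/2})$ is, in general, strictly larger than $\sum_i \sqrt{\lambda_i^{\downarrow}(\Sigma_1)\lambda_i^{\downarrow}(\Sigma_2)}$ without commutativity; the commutativity assumption is exactly what collapses the two conventions. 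Everything else is routine algebra with the Bures formula.
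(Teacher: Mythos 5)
Your proposal is correct, and in fact the paper never spells out a proof of this lemma at all: it is invoked as a known consequence of the Gelbrich/Bures--Wasserstein formula, so your argument fills in exactly the standard route one would take. Starting from
$W_2^2=\operatorname{tr}\bigl(\Sigma_1+\Sigma_2-2(\Sigma_1^{1/2}\Sigma_2\Sigma_1^{1/2})^{1/2}\bigr)$,
simultaneous diagonalization of the commuting PSD matrices, and reading off the cross term as $\sum_i\sqrt{\lambda_i(\Sigma_1)\lambda_i(\Sigma_2)}$ with eigenvalues \emph{paired by the common eigenbasis} is precisely right, and your emphasis on that pairing is a genuinely useful clarification: with an independent sorted ordering the stated identity can fail (e.g.\ $\Sigma_1=\mathrm{diag}(2,1)$, $\Sigma_2=\mathrm{diag}(1,2)$), and in the paper's application the covariances ($EE^T$, $EE^T+\sigma^2I_d$, $(1+\sigma^2)EE^T$) all share the eigenbasis of $EE^T$, where the natural pairing is the one used. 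One small correction to your closing remark: without commutativity the Bures trace satisfies $\operatorname{tr}\bigl((\Sigma_1^{1/2}\Sigma_2\Sigma_1^{1/2})^{1/2}\bigr)\le\sum_i\sqrt{\lambda_i^{\downarrow}(\Sigma_1)\lambda_i^{\downarrow}(\Sigma_2)}$ (a von Neumann--type singular-value inequality applied to $\Sigma_1^{1/2}\Sigma_2^{1/2}$), so it is in general \emph{smaller}, not larger, than the sorted-eigenvalue sum; this is only a side comment and does not affect the validity of your proof under the commutativity hypothesis.
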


\begin{lemma}[\cite{mirsky1975trace}]\label{vonNeumann}
    Suppose $A$ and $B$ are $d \times d$ complex matrices with singular values $\sigma_1(A)\geqslant \sigma_2(A)\geqslant\dots\geqslant \sigma_d(A) \geqslant 0$ and $\sigma_1(B)\geqslant \sigma_2(B)\geqslant\dots\geqslant \sigma_d(B) \geqslant 0$, respectively. Then $$|\mathrm{tr}(AB)| \leqslant \sum_{i=1}^d \sigma_i(A)\sigma_i(B).$$
\end{lemma}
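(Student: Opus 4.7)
The plan is to put the idealized loss $\mathcal{L}(\theta)$ in closed form, exploit its algebraic structure to separate the optimization over $U$ and $V$, and finish via a direct Wasserstein-2 computation using Lemma 2. Because every density appearing in the Fisher divergence is a centered Gaussian, the integrand collapses to $\mathrm{tr}((A_t - B_t^\theta)^2 \Sigma_t^\theta)$ where $A_t = (EE^T + (\sigma^2+\sigma_t^2)I_d)^{-1}$, $\Sigma_t^\theta = UV^TVU^T + \sigma_t^2 I_d$, and $B_t^\theta = (\Sigma_t^\theta)^{-1}$. Cyclicity of the trace together with $B_t^\theta \Sigma_t^\theta = I_d$ immediately kills the $\theta$-dependence in both cross terms (each becomes $\mathrm{tr}(A_t)$) and reduces the quadratic-in-$B$ term to $\mathrm{tr}(B_t^\theta)$, which depends only on the eigenvalues of $V^T V$. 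Applying Woodbury (Lemma 1) to $A_t$ writes it as $a_t I_d - b_t EE^T$ with explicit scalars $a_t, b_t > 0$, and since $(EE^T)^2 = EE^T$ a short expansion of $\mathrm{tr}(A_t^2 \Sigma_t^\theta)$ produces only a constant, a multiple of $\mathrm{tr}(V^T V)$, and a single $U$-coupled term proportional to $-\mathrm{tr}(E^T U V^T V U^T E)$. Integrating over $t$ yields the decomposition $\mathcal{L}(\theta) = \tilde{C}_\sigma + B(U,V) + R(V)$ promised by the sketch.

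The main obstacle is the $U$-minimization over the Stiefel constraint $U^T U = I_r$, which is nonconvex. I plan to attack it via the von Neumann trace inequality (Lemma 3). Writing $M := U^T E \in \mathbb{R}^{r \times r}$, maximizing $\mathrm{tr}(E^T U V^T V U^T E) = \mathrm{tr}(V^T V\, M M^T)$ reduces to bounding the singular values of $M$. Because $U$ and $E$ both have orthonormal columns, $M M^T = U^T E E^T U \preceq I_r$, so $\sigma_i(M M^T) \in [0,1]$; Lemma 3 then gives $\mathrm{tr}(V^T V\, M M^T) \le \mathrm{tr}(V^T V)$, and since $V^T V \succ 0$ equality forces $M M^T = I_r$. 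To lift this from a statement about $M$ to a statement about $U$, I would use the identity $\mathrm{tr}(U^T(I_d - EE^T)U) = \mathrm{tr}(U^T U) - \mathrm{tr}(U^T EE^T U) = r - r = 0$ together with $I_d - EE^T \succeq 0$ to conclude $(I_d - EE^T)U = 0$, hence $U = EE^T U = E(E^T U) = E Q$ with $Q := E^T U = M^T$ orthogonal. Crucially, this maximizer is independent of $V$, which enables a clean profile reduction.

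With $U^* = EQ$ substituted back, both $\mathrm{tr}(E^T U V^T V U^T E)$ and $\mathrm{tr}(V^T V)$ collapse to $\sum_i \lambda_i$ where $\lambda_i$ are the eigenvalues of $V^T V$, and using $a_t - b_t = (1+\sigma^2+\sigma_t^2)^{-1}$ one checks $U^{*T} A_t^2 U^* = (a_t - b_t)^2 I_r$. The profiled loss therefore separates as $\sum_i \int_0^1 \phi_t(\lambda_i)\,dt$ with $\phi_t(\lambda) = \lambda/(1+\sigma^2+\sigma_t^2)^2 + 1/(\lambda+\sigma_t^2)$, up to a $\theta$-independent constant. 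Each $\phi_t$ is strictly convex on $(0,\infty)$ with $\phi_t''(\lambda) = 2/(\lambda+\sigma_t^2)^3 > 0$, and the stationarity condition $(\lambda+\sigma_t^2)^2 = (1+\sigma^2+\sigma_t^2)^2$ has the unique positive root $\lambda = 1+\sigma^2$ \emph{independent of $t$}. Hence the common minimizer $\lambda_i^* = 1+\sigma^2$ for every $i$ minimizes the pointwise integrand and a fortiori the integrated loss; by the Spectral Theorem $(V^*_\sigma)^T V^*_\sigma = (1+\sigma^2)I_r$, so the generator covariance is $U^*(V^*_\sigma)^T V^*_\sigma (U^*)^T = (1+\sigma^2) EE^T$ and $p_{G_{\theta^*_\sigma}} = \mathcal{N}(0,(1+\sigma^2)EE^T)$.

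The Wasserstein identity is then immediate from Lemma 2, since the three covariances $EE^T$, $(1+\sigma^2)EE^T$, and $EE^T + \sigma^2 I_d$ are simultaneously diagonalizable in any orthonormal basis adapted to $\mathrm{range}(E) \oplus \mathrm{range}(E)^\perp$. Along the $r$ directions inside $\mathrm{range}(E)$ every direction contributes $(1+\sigma^2) + 1 - 2\sqrt{1+\sigma^2} = (\sqrt{1+\sigma^2}-1)^2$ to \emph{both} $W_2^2(p_{G_{\theta^*_\sigma}},p_X)$ and $W_2^2(p_{Y,\sigma},p_X)$; along the $d-r$ orthogonal directions the generator contributes $0$ while the noisy distribution contributes $\sigma^2 + 0 - 0 = \sigma^2$ per direction. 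Subtracting yields $W_2^2(p_{Y,\sigma},p_X) - W_2^2(p_{G_{\theta^*_\sigma}},p_X) = (d-r)\sigma^2$, which is strictly positive under the standing hypothesis $r < d$ from Assumption \ref{assump:linear}, completing the proof.
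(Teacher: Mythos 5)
You have not proved the statement you were asked to prove. The statement is Lemma \ref{vonNeumann} itself --- the von Neumann/Mirsky trace inequality $|\mathrm{tr}(AB)| \leqslant \sum_{i=1}^d \sigma_i(A)\sigma_i(B)$ for arbitrary complex $d\times d$ matrices --- which the paper does not prove but simply cites from \cite{mirsky1975trace}. Your proposal instead sketches the proof of Theorem \ref{thm:time-dependent-theorem}: the reduction of the idealized Fisher-divergence loss, the Stiefel-constrained maximization of $\mathrm{tr}(E^TUV^TVU^TE)$, the eigenvalue analysis of $V^TV$, and the Wasserstein-2 computation. Indeed, at the key step you explicitly \emph{invoke} ``the von Neumann trace inequality (Lemma 3)'' as a black box to bound $\mathrm{tr}(V^TV\,MM^T)$, so the inequality you were asked to establish appears in your argument only as an assumed tool, never as a conclusion.

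Concretely, the gap is that no argument whatsoever is offered for the trace inequality itself. A correct proof would have to work with the singular value decompositions $A = P\Sigma_A Q^*$, $B = R\Sigma_B S^*$ and show, for instance via the Birkhoff--von Neumann theorem (the doubly stochastic matrix with entries $|u_{ij}|^2$ arising from the unitary factors) together with a rearrangement argument, that the bilinear form $\mathrm{tr}(AB)$ is maximized in absolute value when the singular vectors are aligned, yielding $\sum_i \sigma_i(A)\sigma_i(B)$. None of this machinery appears in your proposal. As a secondary remark, the material you did write is a reasonable outline of the paper's proof of Theorem \ref{thm:time-dependent-theorem} (with one slip: your claim that equality in the trace bound together with $\mathrm{tr}(U^T(I_d-EE^T)U)=r-r=0$ forces $U=EQ$ presumes $MM^T=I_r$, which is what the paper's Lemma \ref{lem:pca-like-result} establishes via positive definiteness of $\Sigma=V^TV$), but it cannot be credited as a proof of the lemma in question.
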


\begin{lemma}\label{lem:pca-like-result}
    Let $E \in \mathbb{R}^{d \times r}$ with $r < d$ have orthonormal columns and $\Sigma \in \mathbb{R}^{r \times r}$ be symmetric positive definite. Then \begin{align*}
        \argmax_{U^TU = I_r} \mathrm{tr}(EE^TU\Sigma U^T) = \{EQ : Q\ \text{orthogonal}\}.
    \end{align*}
\end{lemma}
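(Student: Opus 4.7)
The plan is to reduce the problem to a cleaner trace maximization over the $r \times r$ matrix $M := E^T U$, then invoke the von Neumann trace inequality (Lemma \ref{vonNeumann}) to identify the optimal value, and finally extract the structure of the argmax set by analyzing the equality case.

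First I would rewrite the objective using the cyclic property of the trace:
\begin{equation*}
    \mathrm{tr}(EE^TU\Sigma U^T) = \mathrm{tr}(\Sigma (E^TU)^T(E^TU)) = \mathrm{tr}(\Sigma M^TM), \qquad M := E^TU \in \mathbb{R}^{r \times r}.
\end{equation*}
Because $E$ has orthonormal columns, $EE^T$ is an orthogonal projection onto a subspace of dimension $r$, so $\|E^T\|_{\mathrm{op}} \leq 1$. Combined with $U^TU = I_r$, this yields $M^TM = U^T(EE^T)U \preceq U^TU = I_r$, hence every eigenvalue of the symmetric PSD matrix $M^TM$ lies in $[0,1]$.

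Next I would apply Lemma \ref{vonNeumann} (or equivalently the basic fact that $\mathrm{tr}(\Sigma H) \leq \sum_i \lambda_i(\Sigma)\lambda_i(H)$ for commuting-eigenvalue bounds on symmetric matrices) to write
\begin{equation*}
    \mathrm{tr}(\Sigma M^TM) \leq \sum_{i=1}^r \lambda_i(\Sigma)\, \lambda_i(M^TM) \leq \sum_{i=1}^r \lambda_i(\Sigma) = \mathrm{tr}(\Sigma),
\end{equation*}
where the second inequality uses $\lambda_i(M^TM) \leq 1$ and $\Sigma \succ 0$. This establishes the upper bound; it is attained, e.g., by any $U$ of the form $EQ$ with $Q$ orthogonal, since then $M = E^TEQ = Q$ and $M^TM = I_r$, giving $\mathrm{tr}(\Sigma M^TM) = \mathrm{tr}(\Sigma)$.

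The main technical step, and the one requiring the most care, is the converse: characterizing all $U$ achieving the bound. Equality forces $\lambda_i(M^TM)=1$ for every $i$, hence $M^TM = I_r$, i.e., $U^T(EE^T)U = I_r$. Combined with $U^TU = I_r$, this gives $U^T(I_d - EE^T)U = 0$. Since $I_d - EE^T$ is PSD (the complementary projection), this equality forces $(I_d - EE^T)U = 0$, so every column of $U$ lies in the column space of $E$. Writing $U = EW$ for some $W \in \mathbb{R}^{r \times r}$, the constraint $I_r = U^TU = W^TE^TEW = W^TW$ shows $W$ must be orthogonal. Thus the argmax set is exactly $\{EQ : Q \in \mathbb{R}^{r \times r},\ Q^TQ = I_r\}$, as claimed. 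The positive definiteness of $\Sigma$ is used here to ensure that full equality in the von Neumann bound genuinely requires $\lambda_i(M^TM) = 1$ for all $i$ rather than only on a subset where $\lambda_i(\Sigma) > 0$.
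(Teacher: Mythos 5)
Your proof is correct and follows essentially the same route as the paper's: reduce to $\mathrm{tr}(\Sigma M^TM)$ with $M=E^TU$, apply the von Neumann trace inequality to get the upper bound $\mathrm{tr}(\Sigma)$, and use positive definiteness of $\Sigma$ to force $M^TM=I_r$ in the equality case. You are in fact slightly more explicit than the paper on the final step—deducing $(I_d-EE^T)U=0$ from $U^T(I_d-EE^T)U=0$ so that $U=EQ$—which the paper leaves implicit.
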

\begin{proof}[Proof of Lemma \ref{lem:pca-like-result}]
    Observe that by the von Neumann trace inequality (Lemma \ref{vonNeumann}), we have that for any feasible $U$, $$\mathrm{tr}(EE^TU\Sigma U^T) =\mathrm{tr}(U^TEE^TU\Sigma)  \leqslant \sum_{i=1}^r \lambda_i(U^TEE^TU)\lambda_i(\Sigma) = \sum_{i=1}^r \lambda_i(EE^T)\lambda_i(\Sigma) = \sum_{i=1}^r \lambda_i(\Sigma).$$ Hence, to maximize $U \mapsto \mathrm{tr}(EE^TU\Sigma U^T)$ over $\{U : U^TU = I_r\}$, we want $U^*$ to satisfy $\mathrm{tr}(EE^TU^*\Sigma (U^*)^T) = \sum_{i=1}^r \lambda_i(\Sigma).$ 
    
    We claim that this occurs if and only if $U^* = EQ$ for some orthogonal $Q$. If $U^*= EQ$, then $(U^*)^TEE^TU^* = Q^TE^TEE^TEQ = I$ so $$\mathrm{tr}(EE^TU^*\Sigma (U^*)^T) = \mathrm{tr}((U^*)^TEE^TU^*\Sigma) = \mathrm{tr}(\Sigma) = \sum_{i=1}^r \lambda_i(\Sigma).$$ For the other direction, suppose $U^*$ maximizes the objective. Then $$\mathrm{tr}((U^*)^TEE^TU^*\Sigma) = \mathrm{tr}(\Sigma) \Longleftrightarrow \mathrm{tr}\left(((U^*)^TEE^TU^* - I_r)\Sigma\right) = 0.$$ Set $Q := E^TU^*.$ Note that the eigenvalues of $Q^TQ$ are bounded by $1$ so $Q^TQ - I_r$ is negative semi-definite while $\Sigma$ is positive definite. But if $\mathrm{tr}((Q^TQ-I_r)\Sigma) = 0$, by positive definiteness of $\Sigma$, we must have $Q^TQ - I_r = 0$, i.e., $Q^TQ = I_r$. This means $Q$ is orthogonal. Since $Q$ is orthogonal and $Q= E^TU^* \Longrightarrow U^* = EQ$, as desired. 
\end{proof}

    

\begin{lemma}\label{lem:eigenvalue-time-dependent-function}
    Fix $\sigma > 0$ and consider a noise schedule $\sigma_t > 0$ for $t \in (0,1)$ such that $(\sigma_t) \subseteq [\sigma_{\min}, \sigma_{\max}]$ for some $0 < \sigma_{\min} < \sigma_{\max} < \infty$. Define the function $f_{\sigma} : (0,\infty) \rightarrow \mathbb{R}$ by $$f_{\sigma}(u) := \mathbb{E}_{t \sim \mathrm{Unif}(0,1)}\left[\frac{u}{(\sigma^2+\sigma_t^2+1)^2} - \frac{u}{\sigma_t^2(u+\sigma_t^{2})}\right].$$ Then $f_{\sigma}$ is strictly convex and has a unique minimizer at $u^* = \sigma^2 + 1$ which is the unique solution to the equation $$\mathbb{E}_{t \sim \mathrm{Unif}(0,1)}\left[\frac{1}{(\sigma^2+\sigma_t^2+1)^2}\right] = \mathbb{E}_{t\sim\mathrm{Unif}(0,1)}\left[\frac{1}{(u^*+\sigma_t^2)^2}\right].$$
\end{lemma}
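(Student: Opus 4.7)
The plan is straightforward: the claim is an elementary calculus exercise once we differentiate under the expectation, which is justified by the assumed boundedness $(\sigma_t) \subseteq [\sigma_{\min},\sigma_{\max}]$ with $\sigma_{\min} > 0$. Since the noise schedule is bounded away from $0$ and from $\infty$, the integrand and all its $u$-derivatives are jointly continuous and dominated on any compact subset of $u \in (0,\infty)$, so differentiation and integration can be exchanged freely.

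First I would establish strict convexity. Observe that the first term $u/(\sigma^2+\sigma_t^2+1)^2$ is linear in $u$ and thus contributes nothing to the second derivative. For the second term, rewrite $-u/[\sigma_t^2(u+\sigma_t^2)] = -\sigma_t^{-2} + 1/(u+\sigma_t^2)$, so that
\begin{equation*}
f_{\sigma}(u) = \mathbb{E}_{t}\!\left[\frac{u}{(\sigma^2+\sigma_t^2+1)^2} - \frac{1}{\sigma_t^2} + \frac{1}{u+\sigma_t^2}\right].
\end{equation*}
Differentiating under the expectation gives
\begin{equation*}
f_{\sigma}'(u) = \mathbb{E}_{t}\!\left[\frac{1}{(\sigma^2+\sigma_t^2+1)^2} - \frac{1}{(u+\sigma_t^2)^2}\right], \qquad f_{\sigma}''(u) = \mathbb{E}_{t}\!\left[\frac{2}{(u+\sigma_t^2)^3}\right] > 0,
\end{equation*}
where positivity follows because the integrand is strictly positive for every $t$. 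Hence $f_{\sigma}$ is strictly convex on $(0,\infty)$.

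Next I would identify the minimizer. Setting $f_{\sigma}'(u) = 0$ yields exactly the fixed-point equation in the statement:
\begin{equation*}
\mathbb{E}_{t}\!\left[\frac{1}{(\sigma^2+\sigma_t^2+1)^2}\right] = \mathbb{E}_{t}\!\left[\frac{1}{(u+\sigma_t^2)^2}\right].
\end{equation*}
By inspection, $u^* = \sigma^2 + 1$ gives $u^* + \sigma_t^2 = \sigma^2 + \sigma_t^2 + 1$ for every $t$, so it solves the equation. Strict convexity of $f_{\sigma}$ implies $f_{\sigma}'$ is strictly increasing, so this solution is the unique critical point and therefore the unique global minimizer.

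I do not foresee any real obstacle; the only point requiring a brief justification is the interchange of expectation and derivative, which is immediate from the uniform bound $\sigma_t \geqslant \sigma_{\min} > 0$ and local boundedness in $u$ on compact intervals. The algebraic identification of $u^* = \sigma^2 + 1$ is by direct substitution, and uniqueness follows from strict monotonicity of $f_{\sigma}'$.
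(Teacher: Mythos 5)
Your proposal is correct and follows essentially the same route as the paper's proof: compute $f_{\sigma}'$ and $f_{\sigma}''$ by differentiating under the expectation, observe $f_{\sigma}''>0$ for strict convexity, and identify $u^*=\sigma^2+1$ by inspection with uniqueness from strict convexity. The partial-fraction rewrite and the explicit justification of the interchange are minor additions the paper leaves implicit, but the argument is the same.
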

\begin{proof}[Proof of Lemma \ref{lem:eigenvalue-time-dependent-function}] First, note that the conditions on $\sigma_t$ ensure that all of the following expectations are finite. By direct calculation, we have the derivatives of $f_{\sigma}$ are \begin{align*}
    f'_{\sigma}(u) & = \mathbb{E}_t\left[\frac{1}{(\sigma^2 +\sigma_t^2+ 1)^2}\right] - \mathbb{E}_t\left[\frac{1}{(\sigma_t^2 + u)^2}\right] \ \text{and}\ 
    f''_{\sigma}(u)  = \mathbb{E}_t\left[\frac{2}{(\sigma_t^2+u)^3}\right]. 
\end{align*} Hence $f''_{\sigma}(u) > 0$ for all $u > 0$ so $f_{\sigma}$ is strictly convex. To find its minimizer $u^*$, setting the derivative equal to $0$ yields $u^*$ must satisfy $$\mathbb{E}_t\left[\frac{1}{(\sigma^2 +\sigma_t^2+ 1)^2}\right] = \mathbb{E}_t\left[\frac{1}{(\sigma_t^2 + u^*)^2}\right].$$ Note that the point $u^* = 1+\sigma^2$ clearly satisfies the critical point equation. Uniqueness follows due to strict convexity. 
    
\end{proof}

\subsection{Proof of Theorem \ref{thm:time-dependent-theorem}} \label{sec:appx-theorem-proof}

We break down the proof of Theorem \ref{thm:time-dependent-theorem} into three key steps. First, we show that minimizing the objective (Eq. \eqref{eq:ideal-score-loss-r}) is equivalent to minimizing a simpler objective. Then, we show that we can derive exact analytical expressions for the global minimizers of this simpler objective, which are then global minimizers of the original score-based loss. Finally, we will directly compute the Wasserstein distance between our learned distilled distribution to the clean distribution and compare this to the noisy distribution.

\paragraph{Reduction of objective function:} For $\sigma_t > 0$, define $p^{\sigma_t}_{G_{\theta}} := \mathcal{N}(0,UV^TVU^T + \sigma^2_t I_d)$ and $s_{\sigma,\sigma_t}(x):= -(EE^T + (\sigma^2 + \sigma_t^2) I_d)^{-1}x$. For the proof, we will assume our parameters $\theta = (U,V) \in \Theta$ so that $U^TU = I_r$ and $V^TV \succ 0$. We consider minimizing the loss $$\mathcal{L}(\theta) := \mathbb{E}_{t \sim \mathrm{Unif}(0,1)}\mathbb{E}_{x_t \sim p^{\sigma_t}_{G_{\theta}}}\left[\left\|s_{\sigma,\sigma_t}(x_t) - \nabla \log p^{\sigma_t}_{G_{\theta}}(x_t) \right\|_2^2\right].$$ For $t \in (0,1)$, consider the inner expectation of the loss $$\Tilde{\mathcal{L}}_{t}(\theta) : = \mathbb{E}_{x_t \sim p^{\sigma_t}_{G_{\theta}}}\left[\left\|s_{\sigma,\sigma_t}(x_t) - \nabla \log p^{\sigma_t}_{G_{\theta}}(x_t) \right\|_2^2\right].$$ For notational convenience, set $\Sigma_{\sigma,t} := EE^T + (\sigma^2 + \sigma_t^2) I_d$ and $\Sigma_{\theta,t} := UV^TVU^T + \sigma^2_t I_d.$ Then $s_{\sigma,\sigma_t}(x) := -\Sigma_{\sigma,t}^{-1}x$ and $\nabla \log p^{\sigma_t}_{G_{\theta}}(x) := -\Sigma_{\theta,t}^{-1}x.$ First, recall that for $x_t \sim p^{\sigma_t}_{G_{\theta}}$ and any matrix $\Sigma$, $\mathbb{E}_{x \sim p^{\sigma_t}_{G_{\theta}}}[\|\Sigma x_t\|_2^2] = \|\Sigma\Sigma_{\theta,t}^{1/2}\|_F^2.$ Using this, we can compute the loss as follows: \begin{align*}
    \tilde{\mathcal{L}}_{t}(\theta) & = \mathbb{E}_{x_t \sim p^{\sigma_t}_{G_{\theta}}}\left[\|(\Sigma_{\sigma,t}^{-1} - \Sigma_{\theta,t}^{-1})x_t\|_2^2\right] \\
    & = \|(\Sigma_{\sigma,t}^{-1} - \Sigma_{\theta,t}^{-1})\Sigma_{\theta,t}^{1/2}\|_F^2 \\
    & = \mathrm{tr}\left(\Sigma_{\theta,t}^{1/2}(\Sigma_{\sigma,t}^{-1} - \Sigma_{\theta,t}^{-1}) (\Sigma_{\sigma,t}^{-1} - \Sigma_{\theta,t}^{-1})\Sigma_{\theta,t}^{1/2}\right) \\
    & = \mathrm{tr}\left(\Sigma_{\theta,t}(\Sigma_{\sigma,t}^{-1} - \Sigma_{\theta,t}^{-1}) (\Sigma_{\sigma,t}^{-1} - \Sigma_{\theta,t}^{-1})\right)\\
    & = \mathrm{tr}\left((\Sigma_{\theta,t}\Sigma_{\sigma,t}^{-1} - I_d) (\Sigma_{\sigma,t}^{-1} - \Sigma_{\theta,t}^{-1})\right) \\
    & = \mathrm{tr}\left(\Sigma_{\theta,t}\Sigma_{\sigma,t}^{-2} -\Sigma_{\theta,t}\Sigma_{\sigma,t}^{-1}\Sigma_{\theta,t}^{-1} - \Sigma_{\sigma,t}^{-1} + \Sigma_{\theta,t}^{-1}\right) \\
    & = \mathrm{tr}\left(\Sigma_{\theta,t}\Sigma_{\sigma,t}^{-2}\right) -\mathrm{tr}\left(\Sigma_{\theta,t}\Sigma_{\sigma,t}^{-1}\Sigma_{\theta,t}^{-1}\right) - \mathrm{tr}\left(\Sigma_{\sigma,t}^{-1}\right) + \mathrm{tr}\left(\Sigma_{\theta,t}^{-1}\right) \\
    & = \mathrm{tr}\left(\Sigma_{\sigma,t}^{-2}\Sigma_{\theta,t}\right) - 2\mathrm{tr}\left(\Sigma_{\sigma,t}^{-1}\right) + \mathrm{tr}\left(\Sigma_{\theta,t}^{-1}\right) \\
    & =: C_{\sigma,t} + \mathrm{tr}\left(\Sigma_{\sigma,t}^{-2}\Sigma_{\theta,t}\right) +\mathrm{tr}\left(\Sigma_{\theta,t}^{-1}\right).
\end{align*} Using Lemma \ref{lemma2}, it is straightforward to see that \begin{align*}
    \Sigma_{\sigma,t}^{-1} & = \frac{1}{\sigma^2 + \sigma_t^2} I_d - \frac{1}{(\sigma^2 + \sigma_t^2)^2(\sigma^2 + \sigma_t^2+1)}EE^T\ \text{and}\ \\
    \Sigma_{\theta,t}^{-1} & = \sigma_t^{-2}I_d - \sigma_t^{-4}U\left((V^TV)^{-1} + \sigma_t^{-2}I_r\right)^{-1}U^T
\end{align*} Hence the third term in $\tilde{\mathcal{L}}_{t}$ is given by $$\mathrm{tr}(\Sigma_{\theta,t}^{-1}) = \mathrm{tr}\left(\sigma_t^{-2}I_d - \sigma_t^{-4}U\left((V^TV)^{-1} + \sigma_t^{-2}I_r\right)^{-1}U^T\right) =: C_{\sigma_t} - \sigma_t^{-4}\mathrm{tr}\left(\left((V^TV)^{-1} + \sigma_t^{-2}I_r\right)^{-1}\right)$$ where we used the cyclic property of the trace and $U^TU=I_r$ in the last equality. For the second term, let $\beta_t^2 := \sigma^2 + \sigma_t^2$ and $\gamma_{\sigma,t}:= \frac{1}{\beta_t^2(\beta_t^2+1)}$. Then we have by direct computation, \begin{align*}
    \mathrm{tr}\left(\Sigma_{\sigma,t}^{-2}\Sigma_{\theta,t}\right) & = \mathrm{tr}\left(\left(\beta_t^{-2}I_d - \gamma_{\sigma,t}EE^T\right)\left(\beta_t^{-2}I_d - \gamma_{\sigma,t}EE^T\right) (UV^TVU^T+\sigma_t^2I_d)\right) \\
    & = \mathrm{tr}\left(\left(\beta_t^{-4}I_d - 2\beta_t^{-2}\gamma_{\sigma,t}EE^T + \gamma_{\sigma,t}^2EE^T\right) (UV^TVU^T+\sigma_t^2I_d)\right) \\
    & = \mathrm{tr}\left(\beta_t^{-4}UV^TVU^T - \sigma_t^2\beta_t^{-4}I_d +\left(\gamma_{\sigma,t}^2 - 2\beta_t^{-2}\gamma_{\sigma,t}\right) EE^TUV^TVU^T\right) \\
    & \qquad - \mathrm{tr}\left(2\beta_t^{-2}\sigma_t^2EE^T + \gamma_{\sigma,t}^2\sigma_t^2I_d\right) \\
    & =:\Tilde{C}_{\sigma,t} + \beta_t^{-4}\mathrm{tr}(UV^TVU^T) +\left(\gamma_{\sigma,t}^2 - 2\beta_t^{-2}\gamma_{\sigma,t}\right) \cdot \mathrm{tr}(EE^TUV^TVU^T) \\
    & = \Tilde{C}_{\sigma,t} + \beta_t^{-4}\mathrm{tr}(V^TV) +\left(\gamma_{\sigma,t}^2 - 2\beta_t^{-2}\gamma_{\sigma,t}\right) \cdot \mathrm{tr}(EE^TUV^TVU^T)
    \end{align*} where we used the cyclic property of trace and orthogonality of $U$ in the final line. Combining the above displays, we get that there exists a constant $C_{\sigma,\sigma_t} := C_{\sigma,t} + C_{\sigma_t} + \Tilde{C}_{\sigma,t}$ such that \begin{align*}
        \tilde{\mathcal{L}}_{t}(\theta) & = C_{\sigma,\sigma_t} + \left(\frac{1}{\beta_t^4(\beta_t^2+1)^2} - \frac{2}{\beta_t^4(\beta_t^2+1)}\right)\cdot\mathrm{tr}(EE^TUV^TVU^T) \\
        & + \beta_t^{-4}\mathrm{tr}(V^TV) - \sigma_t^{-4}\mathrm{tr}\left(\left((V^TV)^{-1} + \sigma_t^{-2}I_r\right)^{-1}\right) \\
        & =: C_{\sigma,\sigma_t} + B_t(U,V) + R_t(V)
    \end{align*} where we have defined the quantities \begin{align*}
        B_t(U,V) & := \left(\frac{1}{\beta_t^4(\beta_t^2+1)^2} - \frac{2}{\beta_t^4(\beta_t^2+1)}\right)\cdot\mathrm{tr}(EE^TUV^TVU^T)\ \text{and} \\
        R_t(V) & := \beta_t^{-4}\mathrm{tr}(V^TV) -\sigma_t^{-4}\mathrm{tr}\left(\left((V^TV)^{-1} + \sigma_t^{-2}I_r\right)^{-1}\right).
    \end{align*} Recalling the definition of $\mathcal{L}(\cdot)$, we have that \begin{align*}
        \mathcal{L}(\theta) = \mathbb{E}_{t \sim \mathrm{Unif}(0,1)}\left[\tilde{\mathcal{L}}_t(\theta)\right] =  \mathbb{E}_{t \sim \mathrm{Unif}(0,1)}\left[C_{\sigma,\sigma_t} +B_t(U,V) + R_t(U,V)\right].
    \end{align*} Hence we have the equivalence $$\argmin_{\theta \in \Theta} \mathcal{L}(\theta) = \argmin_{\theta \in \Theta} \mathbb{E}_{t \sim \mathrm{Unif}(0,1)}\left[B_t(U,V)\right] + \mathbb{E}_{t \sim \mathrm{Unif}(0,1)}[R_t(V)].$$

    \paragraph{Form of minimizers:} We use the shorthand notation $\mathbb{E}_t[\cdot] := \mathbb{E}_{t \sim \mathrm{Unif}(0,1)}[\cdot]$. First, note that we can first minimize $\mathbb{E}_t[B_t(U,V)]$ over feasible $U$. But note that $$\mathbb{E}_{t}[B_t(U,V)] = \underbrace{\mathbb{E}_t \left[\frac{1}{\beta_t^4(\beta_t^2 + 1)^2} - \frac{2}{\beta_t^4(\beta_t^2 + 1)}\right]}_{< 0} \mathrm{tr}(EE^TUV^TVU^T)$$ since for any $t$, $\frac{1}{(\beta_t^2 + 1)^2} < \frac{2}{(\beta_t^2 + 1)}.$ Hence minimizing $\mathbb{E}_t[B_t(U,V)]$ is equivalent to maximizing $\mathrm{tr}(EE^TUV^TVU^T).$ Taking $\Sigma = V^TV$ in Lemma \ref{lem:pca-like-result}, we have that the minimizer of $\mathbb{E}_t[B_t(U,V)]$ is given by $$U^* = EQ\ \text{for some orthogonal}\ Q.$$ Moreover, the proof of Lemma \ref{lem:pca-like-result} shows that $\mathrm{tr}(EE^TU^*V^TV(U^*)^T) = \mathrm{tr}(V^TV)$. This gives
    \begin{align*}
        \mathbb{E}_t[B_t(U^*,V)] & = \mathbb{E}_t\left(\frac{1}{\beta_t^4(\beta_t^2+1)^2} - \frac{2}{\beta_t^4(\beta_t^2+1)}\right) \mathrm{tr}(V^TV).
    \end{align*} In summary, we now must minimize the following with respect to invertible $V$: \begin{align*}
        \mathbb{E}_t[B_t(U^*,V)] + \mathbb{E}_t[R_t(V)] & = \mathbb{E}_t\left(\frac{1}{\beta_t^4(\beta_t^2+1)^2} - \frac{2}{\beta_t^4(\beta_t^2+1)} + \frac{1}{\beta_t^4}\right) \mathrm{tr}(V^TV) \\
        & \qquad - \mathbb{E}_t\left[\sigma_t^{-4}\mathrm{tr}\left(\left((V^TV)^{-1} + \sigma_t^{-2}I_r\right)^{-1}\right)\right] \\
        & = \mathbb{E}_t\left(\frac{1}{\beta_t^4}\left(\frac{1}{\beta_t^2 + 1} - 1\right)^2\right) \mathrm{tr}(V^TV) - \mathbb{E}_t\left[\sigma_t^{-4}\mathrm{tr}\left(\left((V^TV)^{-1} + \sigma_t^{-2}I_r\right)^{-1}\right)\right] \\
        & = \mathbb{E}_t\left(\frac{1}{\beta_t^4}\left(\frac{\beta_t^2}{\beta_t^2 + 1}\right)^2\right) \mathrm{tr}(V^TV) - - \mathbb{E}_t\left[\sigma_t^{-4}\mathrm{tr}\left(\left((V^TV)^{-1} + \sigma_t^{-2}I_r\right)^{-1}\right)\right] \\
        & = \mathbb{E}_t\left(\frac{1}{(\beta_t^2 + 1)^2}\right) \mathrm{tr}(V^TV) - \mathbb{E}_t\left[\sigma_t^{-4}\mathrm{tr}\left(\left((V^TV)^{-1} + \sigma_t^{-2}I_r\right)^{-1}\right)\right]
    \end{align*} where in the second equality, we completed the square.

    We now claim that $\mathbb{E}_t[B_t(U^*,V)] + \mathbb{E}_t[R_t(V)]$ solely depends on the eigenvalues of $V^TV$. In particular, for invertible $V$, note that $V^TV \succ 0$ so it admits the decomposition $V^TV = Q\Lambda Q^T$ where $Q^TQ=QQ^T=I_r$ and $\Lambda$ is a diagonal matrix with positive entries $\Lambda_{ii}= \lambda_i(V^TV) > 0$. Hence $\mathrm{tr}(V^TV) = \mathrm{tr}(Q\Lambda Q^T) = \mathrm{tr}(Q^TQ\Lambda)= \mathrm{tr}(\Lambda) = \sum_{i=1}^r \lambda_i(V^TV).$ Likewise, we have using the orthogonality of $Q$ that for any $\varepsilon > 0$, \begin{align*}
        \mathrm{tr}\left(\left((V^TV)^{-1} + \varepsilon^{-2}I_r\right)^{-1}\right) & = \mathrm{tr}\left(\left((Q\Lambda Q^T)^{-1} + \varepsilon^{-2}I_r\right)^{-1}\right)\\
        & = \mathrm{tr}\left(\left(Q\Lambda ^{-1}Q^T + \varepsilon^{-2}QQ^T\right)^{-1}\right) \\
        & = \mathrm{tr}\left(\left(Q\left(\Lambda^{-1} + \varepsilon^{-2}I_r\right)Q^T\right)^{-1}\right) \\
        & = \mathrm{tr}\left(Q\left(\Lambda^{-1} + \varepsilon^{-2}I_r\right)^{-1}Q^T\right) \\
        & = \mathrm{tr}\left(\left(\Lambda^{-1} + \varepsilon^{-2}I_r\right)^{-1}\right) \\
        & = \sum_{i=1}^r \frac{1}{\lambda_i(V^TV)^{-1} + \varepsilon^{-2}} \\
        & = \sum_{i=1}^r \frac{\lambda_i(V^TV) \cdot \varepsilon^2}{\lambda_i(V^TV) + \varepsilon^2}.
    \end{align*} In sum, the final objective is a particular function of the eigenvalues of $V^TV$: \begin{align*}
        \mathbb{E}_t[B_t(U^*,V)] + \mathbb{E}_t[R_t(V)] & = \sum_{i=1}^r \mathbb{E}_t\left[ \frac{\lambda_i(V^TV)}{(\beta_t^2+1)^2} - \frac{\lambda_i(V^TV)}{\sigma_t^2(\lambda_i(V^TV) + \sigma_t^{2})} \right] \\
        & = \sum_{i=1}^r \mathbb{E}_t\left[ \frac{\lambda_i(V^TV)}{(\sigma^2 + \sigma_t^2+1)^2} - \frac{\lambda_i(V^TV)}{\sigma_t^2(\lambda_i(V^TV) + \sigma_t^{2})} \right] \\
        & =: \sum_{i=1}^r f_{\sigma}(\lambda_i(V^TV)).
    \end{align*} In Lemma \ref{lem:eigenvalue-time-dependent-function}, we show that the function $u \mapsto f_{\sigma}(u)$ is strictly convex on $(0,\infty)$ with a unique minimizer at $1+\sigma^2$. Thus $V \mapsto B(U^*,V) + R(V)$ for invertible $V$ is minimized when the gram matrix of $V^*_{\sigma}$ has equal eigenvalues $\lambda_i((V^*_{\sigma})^TV^*_{\sigma}) = 1+\sigma^2$ for all $i \in [r]$. Since all of its eigenvalues are the same, by the Spectral Theorem, we must have that  $(V^*_{\sigma})^TV^*_{\sigma} = (1+\sigma^2 )I_r.$

        \paragraph{Wasserstein bound:} We now show the Wasserstein error bound. Note that $\theta^*_{\sigma} = (U^*,V_{\sigma}^*)$ induces the distribution $p_{G_{\theta^*_{\sigma}}}$ defined by $$x = G_{\theta^*_{\sigma}}(z),\ z \sim \mathcal{N}(0,I_d) \Longleftrightarrow x \sim p_{G_{\theta^*_{\sigma}}} :=  \mathcal{N}(0,EQ(V_{\sigma}^*)^TV^*_{\sigma}Q^TE^T) = \mathcal{N}(0,(1+\sigma^2)EE^T).$$ Then by Lemma \ref{lemma2}, we have
    \[
\begin{aligned}
W_2^2(p_{Y,\sigma},p_X) &= r\Bigl(1+\sigma^2+1-2\sqrt{1+\sigma^2}\Bigr) + (d-r)\sigma^2,\\
W_2^2(p_{G_{\theta^*_{\sigma}}},p_X) &= r\Bigl(1+\sigma^2+1-2\sqrt{1+\sigma^2}\Bigr).
\end{aligned}
\]  This gives $$W_2^2(p_{G_{\theta^*_{\sigma}}},p_X) = W_2^2(p_{Y,\sigma},p_X) - (d-r)\sigma^2 < W_2^2(p_{Y,\sigma},p_X).$$

\section{Distillation Loss}
\label{app:distillation}
In Section~\ref{sec:method} and Algorithm~\ref{alg:ambient_distill}, we introduced the generator loss formulation given by Eq.~\ref{eq:distillation}. However, since Eq.~\ref{eq:distillation} cannot be directly utilized for training the generator, it requires instantiation. As discussed in Section~\ref{sec:experiment}, we adopt three widely used distillation methods: SDS~\cite{poole2022dreamfusion}, DMD~\cite{yin2024one} (also referred to as Diff-Instruct~\cite{luo2023diff} or VSD~\cite{wang2024prolificdreamer}), and SiD~\cite{zhou2024score}. For completeness, we present their corresponding generator loss formulations below, while deferring implementation details such as scheduling and hyperparameter selection to the original papers.

We define the perturbed sample as 
\begin{equation}
    x_t = x_g + \sigma_t \epsilon, \quad \epsilon \sim \mathcal{N}(0, I_d).
\end{equation}
By the results in~\cite{song2021scorebased, karras2022elucidating}, the score function, mean prediction function, and epsilon prediction function are related as follows:
\begin{equation}
    s_\phi(x_t, t) = -\frac{x_t - f_\phi(x_t, t)}{\sigma_t^2}, \quad
    \varepsilon_\phi(x_t, t) = -\sigma_t s_\phi(x_t, t).
\end{equation}
These relationships also extend to the generative process of the fake diffusion model $f_\psi$, ensuring consistency across different parametrizations.

The gradient of the generator loss for SDS is given by:
\begin{equation}
    \nabla_\theta\mathcal{L}_{\text{SDS}} = \mathbb{E}_{z,t,x,x_t,\epsilon} 
    \left[ w_t (\varepsilon_\phi(x_t,t)-\epsilon) \frac{dG}{d\theta} \right].
    \label{eq:sds_generator}
\end{equation}
Note that we don't include the training of fake diffusion in the D-SDS to follow the original SDS paper.

For DMD, the generator loss gradient takes the form:
\begin{equation}
    \nabla_\theta\mathcal{L}_{\text{DMD}} = \mathbb{E}_{z,t,x,x_t,\epsilon} 
    \left[ w_t (s_{\psi}(x_t,t) - s_{\phi}(x_t,t)) \frac{dG}{d\theta} \right].
    \label{eq:dmd_generator}
\end{equation}

The SiD loss function gradient is formulated as:
\begin{align}
    \nabla_\theta\mathcal{L}_{\text{SiD}} &= 
        \nabla_\theta\mathbb{E}_{z,t,x,x_t,\epsilon} \Big[ (1 - \alpha) w(t) \| f_{\psi}(x_t, t) - f_{\phi}(x_t, t) \|_2^2 \nonumber\\ 
        &\quad + w(t) \left( f_{\phi}(x_t, t) - f_{\psi}(x_t, t) \right)^T \left( f_{\psi}(x_t, t) - x_g \right) \Big].
        \label{eq:sid_generator}
\end{align}

These formulations encapsulate the key differences in how D-SDS, D-DMD, and D-SiD approach generator training, each leveraging different mechanisms to refine the learned score or mean function.

\section{Implementation Details}\label{app:implementation}

\subsection{Denoising Training, Sampling, and Distillation Algorithm}
\label{app:alg}

In this section, we provide a comprehensive details of the denoising training, distillation, and sampling procedures. Algorithm~\ref{alg:ambient_training} details the application of Ambient Tweedie (Eq \eqref{eq:ambient_diffusion}) for pertaining with the adjusted diffusion objective. Algorithm~\ref{alg:ambient_sampling} outlines the sampling procedure employed to obtain the results for Ambient-Truncate and Ambient-Full, as discussed in Section~\ref{sec:experiment}.

\begin{algorithm}
\caption{Pretraining with Adjusted Diffusion Objectives \cite{daras2024ambienttweedie}}
\label{alg:ambient_training}
\begin{algorithmic}[1]
\Procedure{Denoising-Pretraining} {$\{y^{(i)}\}_{i=1}^N$, $\sigma$, $p(\sigma_t),K$} 
\Comment{Diffusion schedule $p(\sigma_t)$.}

\For{k=1,\dots,K} \Comment{Training K iterations.}
    \State Sample batch $y \sim \{y^{(i)}\}_{i=1}^N$, $\sigma_t \sim p(\sigma_t)$, $\epsilon \sim \mathcal{N}(0, I_d)$
    \State $\sigma_t = \max\{\sigma, \sigma_t\}$ \Comment{Noise level clip.}
    \State $x_t = y + \sqrt{\sigma_t^2 - \sigma^2} \cdot \epsilon$ \Comment{Inject noise to $\sigma_t$.}
    \State Update parameters of $f$ by gradient descent step \Comment{Eq \ref{eq:ambient_diffusion}}.
    \[
    \nabla \left\| \frac{\sigma_t^2 - \sigma^2}{\sigma_t^2} f(x_t, t) + \frac{\sigma^2}{\sigma_t^2} x_t - y \right\|^2
    \] 
\EndFor

\State \Return Trained diffusion model $f$  

\EndProcedure

\end{algorithmic}
\end{algorithm}

\noindent
\begin{algorithm}
\caption{Ambient Sampling \cite{daras2024ambienttweedie}}
\label{alg:ambient_sampling}
\begin{algorithmic}[1]
\Procedure{Ambient-Sampling}{$f, \sigma, \{\sigma_t\}_{t=0}^T$}
    \State Sample $x_T \sim \mathcal{N}(0, \sigma_T I_d)$
    \For{$t = T, T-1, \dots, 1$}
        \State $\hat{x}_0 \leftarrow f(x_t, t)$
        \If{Truncation is applied and $\sigma_{t-1} < \sigma$} 
            \State \textbf{return} $\hat{x}_0$ \hfill \Comment{Truncated Sampling}
        \EndIf
        \State $x_{t-1} \leftarrow x_t - \frac{\sigma_t - \sigma_{t-1}}{\sigma_t} (x_t - \hat{x}_0)$
    \EndFor
    \State \textbf{return} $x_0$ \hfill \Comment{Full Sampling}
\EndProcedure
\end{algorithmic}
\end{algorithm}


\subsection{Training and Distillation Details and Hyperparameter Selection}
\label{app:hyper}
For training the teacher diffusion model, we train on 200 million images for CIFAR-10, matching the computational budget of EDM, while all other datasets are trained on 100 million images, corresponding to half of the EDM computational budget. Inference wall time is measured using four A6000 GPUs with a batch size of 1024. All images are normalized to the range $[-1,1]$ before adding additive Gaussian noise. We adopt the training hyperparameters from the EDM codebase~\cite{karras2022elucidating}. 

For distillation, we train CIFAR-10 on 100 million images and all other datasets on 15 million images, as we observe that this training budget is sufficient to achieve a competitive FID score. All the  hyperparameters remain identical to those in SiD~\cite{zhou2024score}.

For CelebA-HQ, the setting is the same as FFHQ and AFHQ-v2 except that the dropout rate is 0.15.
For our experiments with consistency, we use 8 steps for the reverse sampling and 32 samples to estimate the expectations. We use a fixed coefficient to weight the consistency loss that is chosen as a hyperparameter from the set of $\{ 0.1, 1.0, 10.0 \}$ to maximize performance. Upon acceptance of this work, we will provide all the code and checkpoints to accelerate research in this area.

\subsection{Evaluation}

We generate 50,000 images to compute FID. Each FID number reported in this paper is the average of three independent FID computations that correspond to the seeds: 0-49,999, 50,000-99,999, 100,000-149999.
  \begin{table}[h]
    \centering
     \caption{\textbf{Results of our methods (D-SDS, D-DMD, D-SiD) on CIFAR-10 at various noise levels with standard deviations.}   }
 {%
    \begin{tabular}{l|ccc} 
    \toprule
        \textbf{Methods} &  \textbf{\boldmath$\sigma=0.1$} & \textbf{\boldmath$\sigma=0.2$} & \textbf{\boldmath$\sigma=0.4$} \\ 
    \midrule 
  {D-SDS} & > 200 & > 200 & > 200 \\
    {D-DMD}  & 12.52 {\small$\pm$0.04} & 7.48 {\small$\pm$0.06} & 30.09 {\small$\pm$0.23} \\
  {D-SiD} &  \textbf{3.98} {\small$\pm$0.04} & \textbf{4.77} {\small$\pm$0.03} & \textbf{21.63} {\small$\pm$0.03} \\
    \bottomrule
    \end{tabular}%
    }
\end{table}

\begin{table}[h]
    \centering
        \caption{\textbf{Results of our method on various datasets including CIFAR-10, FFHQ, CelebA-HQ and AFHQ-v2 at $\sigma=0.2$ with standard deviations.}  }
    \begin{tabular}{l|cccc}
    \toprule
        \textbf{Methods}  &  \textbf{CIFAR-10} & \textbf{FFHQ} & \textbf{CelebA-HQ} & \textbf{AFHQ-v2} \\ \midrule
        Observation & 127.22 {\small$\pm$0.12} & 110.83 {\small$\pm$0.22} & 107.22 {\small$\pm$0.18}  & 51.51 {\small$\pm$0.15}  \\  
        Ambient-Full & 60.73 {\small$\pm$0.21} & 41.52 {\small$\pm$0.10} & 61.14 {\small$\pm$0.14} &   17.93 {\small$\pm$0.03} \\   
        Ambient-Truncated & 12.21 {\small$\pm$0.03} & 14.67 {\small$\pm$0.02} & 13.90 {\small$\pm$0.01} & 9.82 {\small$\pm$0.02} \\  \midrule
        DSD (one-step) & \textbf{4.77} {\small$\pm$0.03} & \textbf{6.29} {\small $\pm$0.15} & \textbf{6.48} {\small $\pm$0.09} & \textbf{5.42} {\small $\pm$0.08}\\ \bottomrule
    \end{tabular} 
\end{table}

\subsection{Model Selection Criterion}
\label{app:proximal}
  
We present the evolution of FID and Proximal FID results on FFHQ, CelebA-HQ, and AFHQ in Fig. \ref{fig:3dataset}.

  \begin{figure}[htbp]
    \centering
    \begin{subfigure}{0.32\textwidth}
        \centering
        \includegraphics[width=\linewidth]{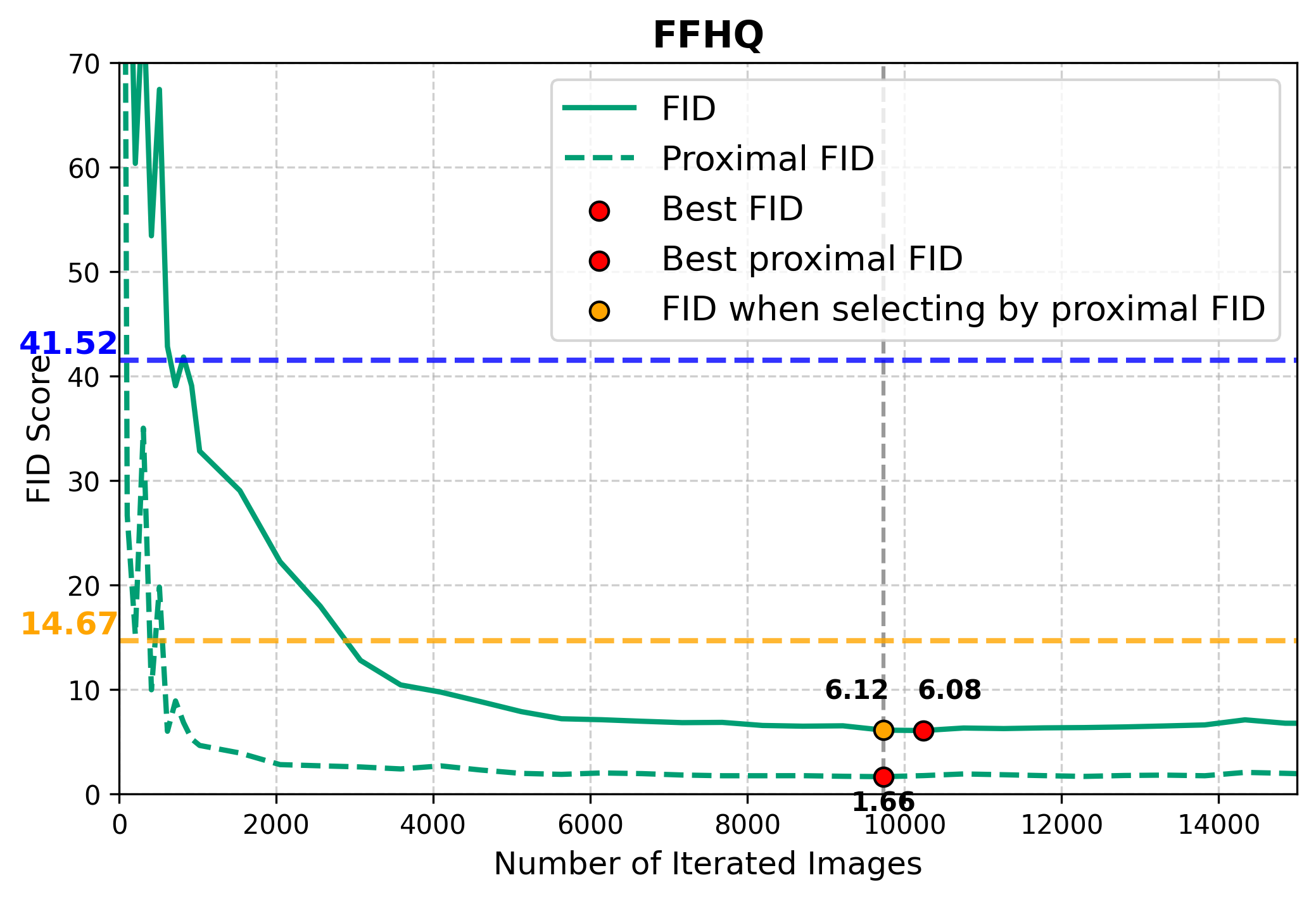}
    \end{subfigure}
    \begin{subfigure}{0.32\textwidth}
        \centering
        \includegraphics[width=\linewidth]{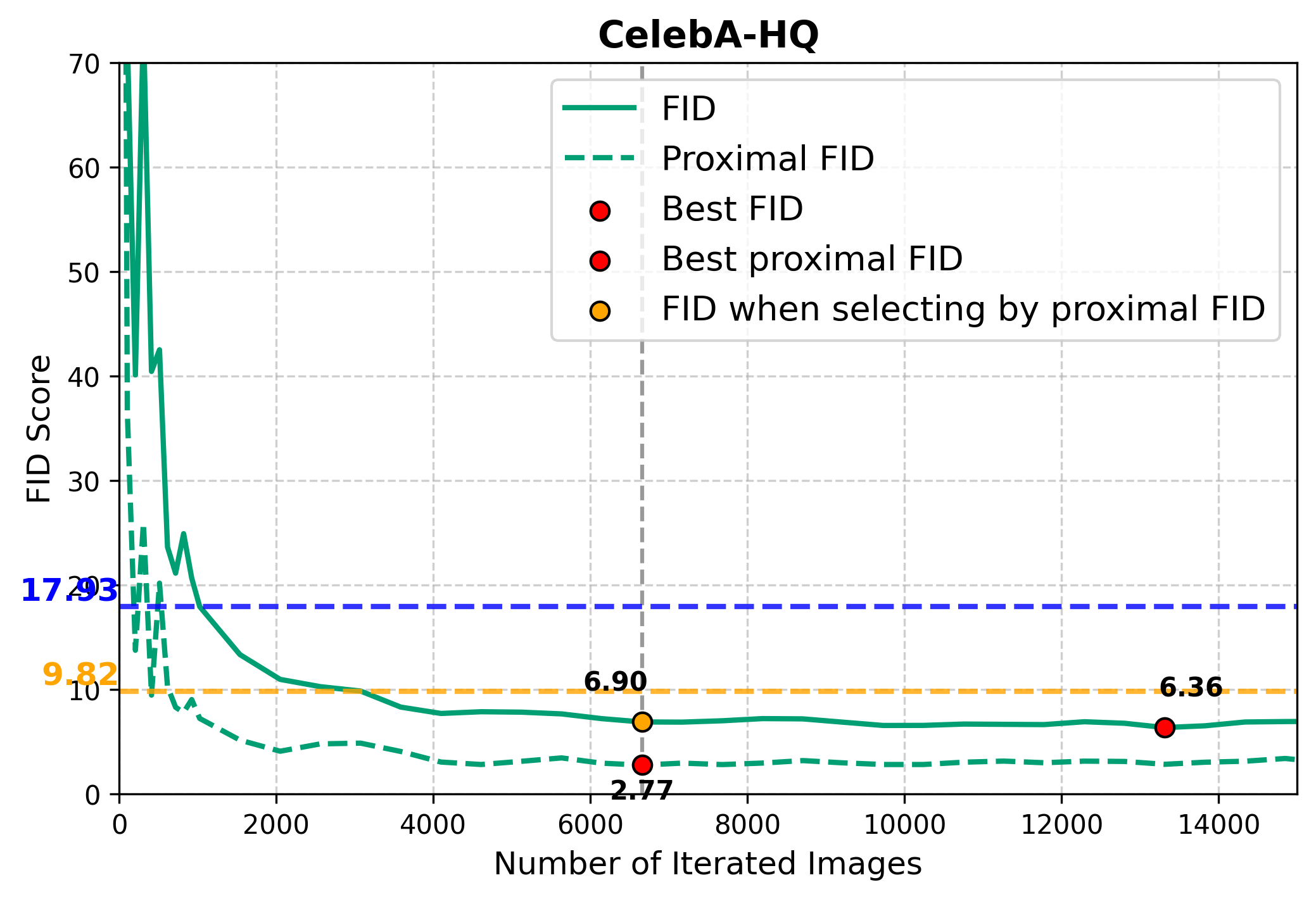}
    \end{subfigure}
    \begin{subfigure}{0.32\textwidth}
        \centering
        \includegraphics[width=\linewidth]{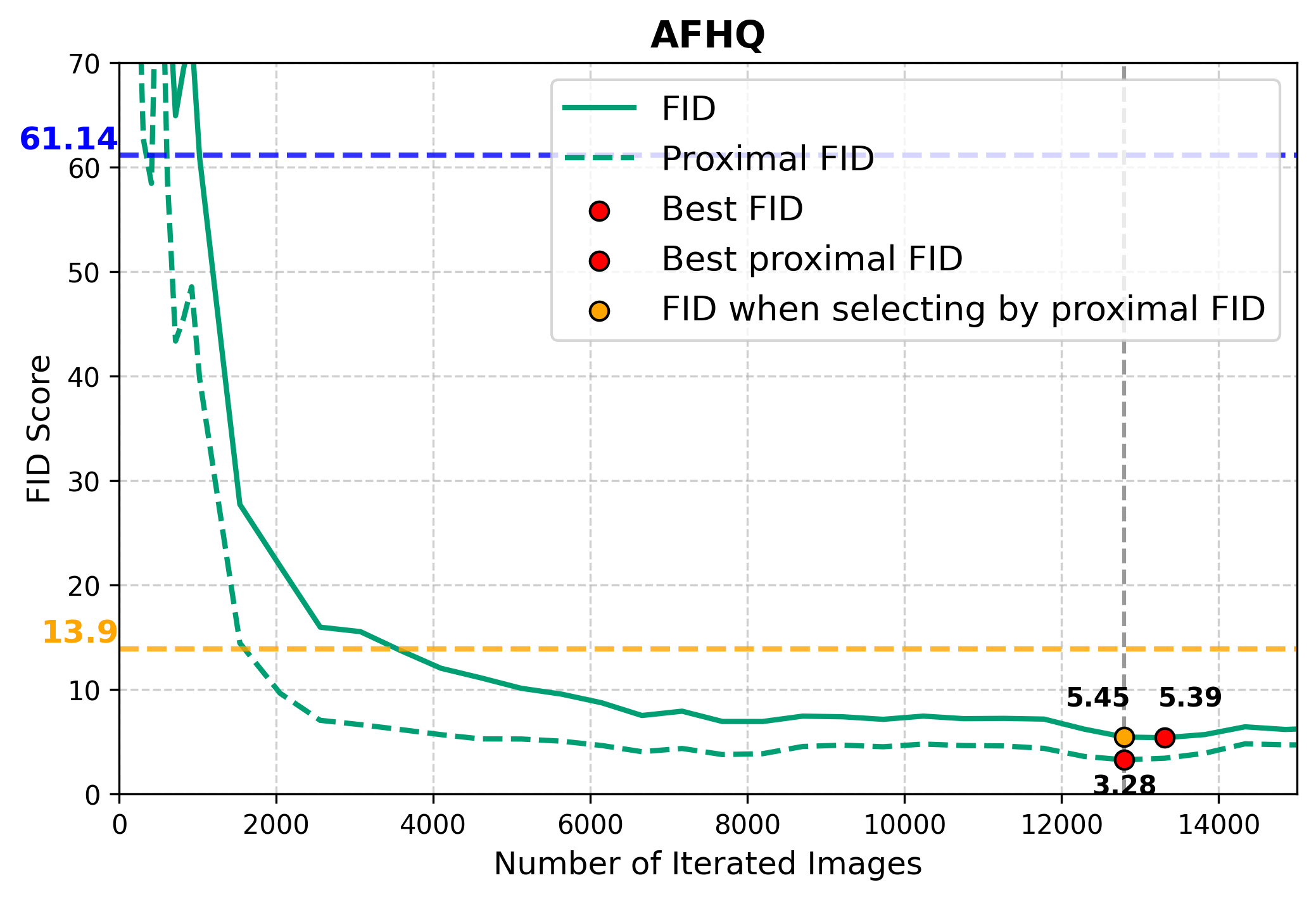}
    \end{subfigure}
    \caption{\textbf{Evolution of FID and Proximal FID results on FFHQ , CelebAHQ and AFHQ-v2.}  Proximal FID serves as a reliable alternative to true FID, consistently selecting models whose ground-truth FID is close to the best achievable FID.}
    \label{fig:3dataset}
\end{figure}

\subsection{Training and Inference Efficiency}

We demonstrated that our proposed methods not only improve performance metrics but also enhance the overall efficiency of both the training and inference phases, as briefly illustrated in Fig. \ref{fig:fid_pretrain_distill}. A detailed time analysis is provided in Tab. \ref{tab:efficiency}. 
During training, the additional distillation phase introduces only a minor overhead, as FID decreases rapidly and surpasses the teacher diffusion model, {Ambient-Truncated}, within just \textbf{4 hours}.  
For inference, our one-step generator enables the generation of 50k images in only 20 seconds—compared to 10 minutes with the diffusion model—achieving a \textbf{30$\times$ speedup}.  Inference wall time is recorded using a batch size of 1024 on 4 Nvidia
RTX A6000 GPUs.
These results confirm that \textbf{denoising score distillation is not merely a trade-off between quality and speed but a mechanism for improving both simultaneously}. Our findings challenge conventional perspectives on distillation and suggest distillation as a new direction for learning generative models from corrupted data.




\begin{table}[!t]
    \centering
   \caption{\textbf{Training and inference efficiency of our method.}  During training, the additional distillation phase introduces only a minor overhead, as FID decreases rapidly and surpasses the teacher diffusion model, {Ambient-Truncated}, within just {4 hours}.  
For inference, our one-step generator enables the generation of 50k images in only 20 seconds, achieving a  {30$\times$ speedup}.}
    \resizebox{\textwidth}{!}{%
    \begin{tabular}{l|c|ccc|cc}
    \toprule
        \multirow{2}{*}{\textbf{Datasets}} & \multirow{2}{*}{\textbf{Pretraining Time}} & \multicolumn{3}{c}{\textbf{Distillation Time to Achieve the Same FID as}} & \multicolumn{2}{c}{\textbf{Time to Generate 50k Images}} \\ \cmidrule(lr){3-7}
        & & \textbf{Ambient-Full} & \textbf{Ambient-Truncated} & \textbf{Best} & \textbf{Diffusion} & \textbf{DSD} \\ \midrule
        CIFAR-10 & \multirow{4}{*}{\textasciitilde{}2 days} & \ \ \ \ 7 minutes & \ \ \ \textasciitilde{}3 hours & \ \textasciitilde{}3 days & 10 minutes & 20 seconds \\\cmidrule(lr){6-7}
        FFHQ &  & \ \  56 minutes &\ \ \  \textasciitilde{}3 hours & \ \ \textasciitilde{}9 hours & \multirow{3}{*}{15 minutes} & \multirow{3}{*}{30 seconds} \\
        CelebA-HQ & & \ \  34 minutes & \ \ \ \textasciitilde{}2 hours & \textasciitilde{}13 hours & & \\
        AFHQ-v2 &  &\ \ 80 minutes & \ \ \ \textasciitilde{}3 hours & \textasciitilde{}13 hours & & \\
    \bottomrule
    \end{tabular}}
    \label{tab:efficiency}
\end{table}

\begin{figure}[h]
    \centering
\includegraphics[width=1\linewidth]{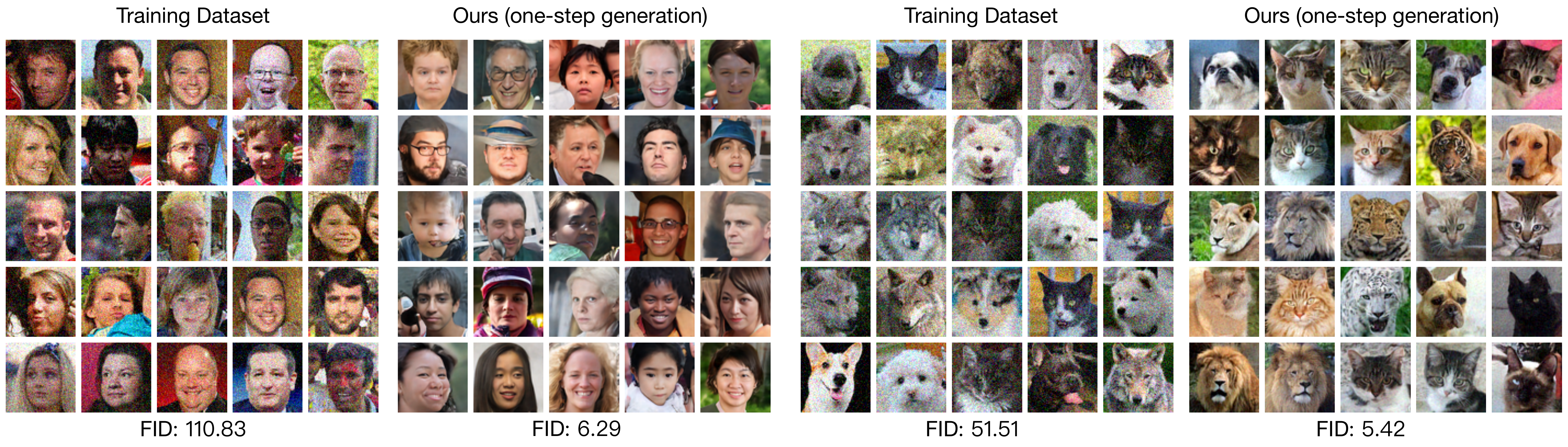}
    \caption{\textbf{Qualitative results of DSD (ours, one-step) at $\sigma=0.2$.}  The left two panels are from FFHQ, while the right two are from AFHQ-v2.  Zoom in for better viewing.}
    \label{fig:qual_results_add}
\end{figure}

\section{Additional Qualitative Results}
\label{app:quality}
In this section, we present additional qualitative results about the noisy training dataset and images generated by our D-SiD model. A quick view is in Fig. \ref{fig:qual_results_add}.

\begin{figure}
    \centering
    \includegraphics[width=\linewidth]{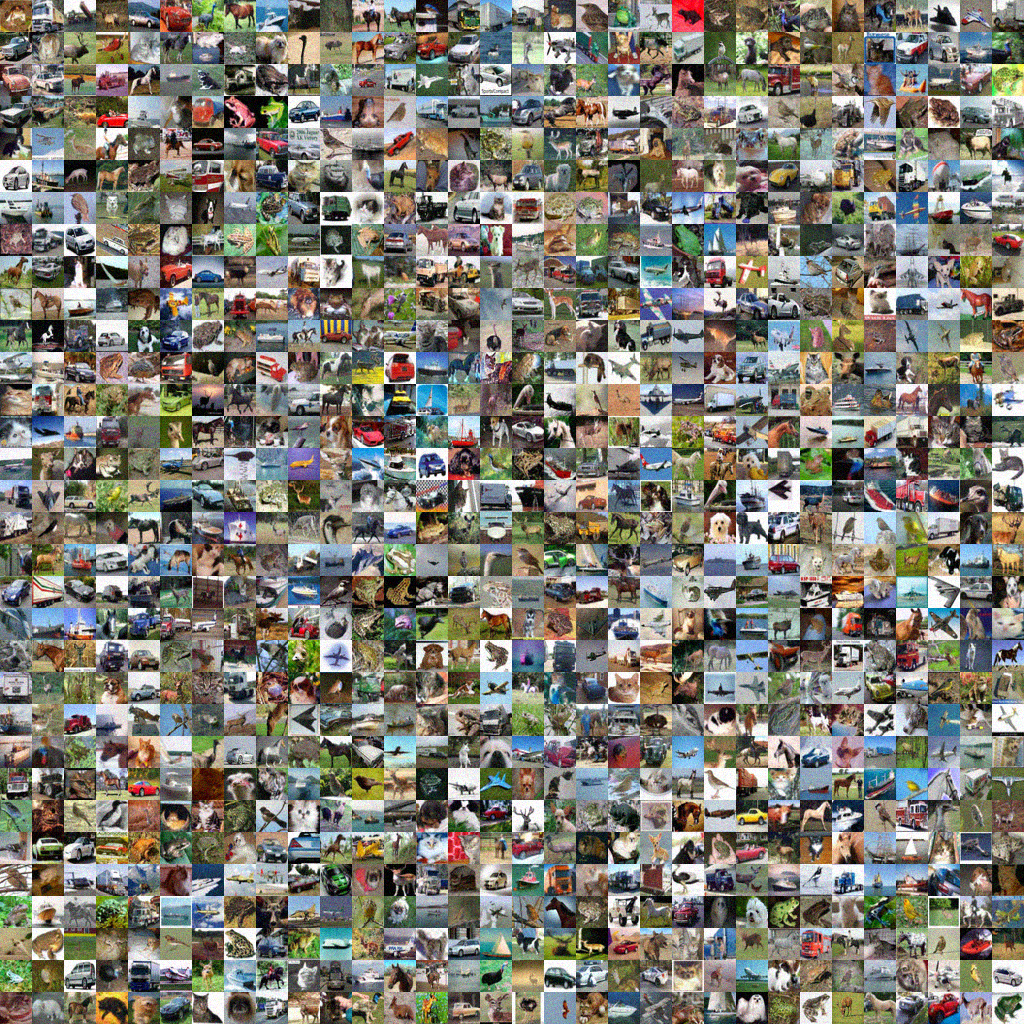}
    \caption{CIFAR-10 32x32 noisy dataset with $\sigma=0.1$ (FID: 73.74).}
    \label{fig:5}
\end{figure}

\begin{figure}
    \centering
    \includegraphics[width=\linewidth]{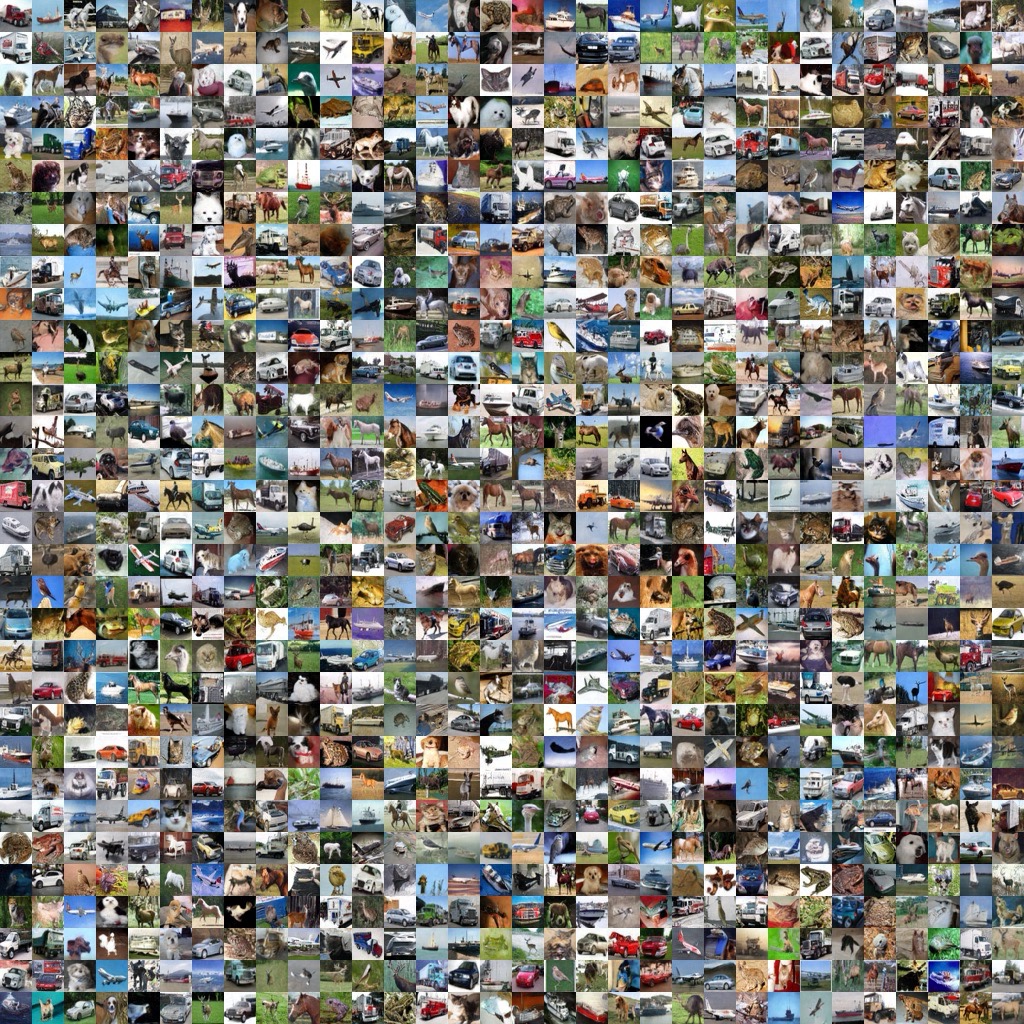}
    \caption{Unconditional CIFAR-10 32x32 random images generated with D-SiD training with noisy dataset with $\sigma=0.1$ (FID: 3.98).}
    \label{fig:4}
\end{figure}

\begin{figure}
    \centering
    \includegraphics[width=\linewidth]{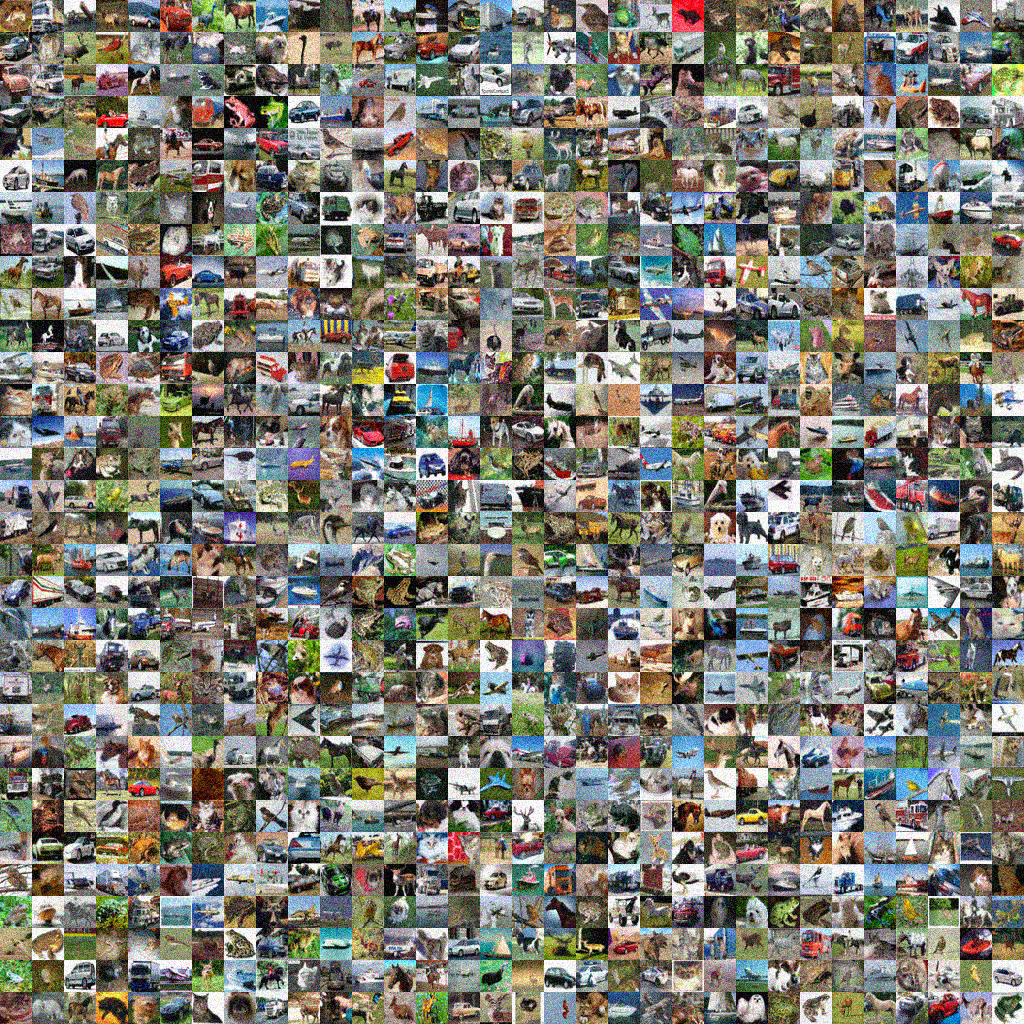}
    \caption{CIFAR-10 32x32 noisy dataset with $\sigma=0.2$ (FID: 127.22).}
    \label{fig:3}
\end{figure}

\begin{figure}
    \centering
    \includegraphics[width=\linewidth]{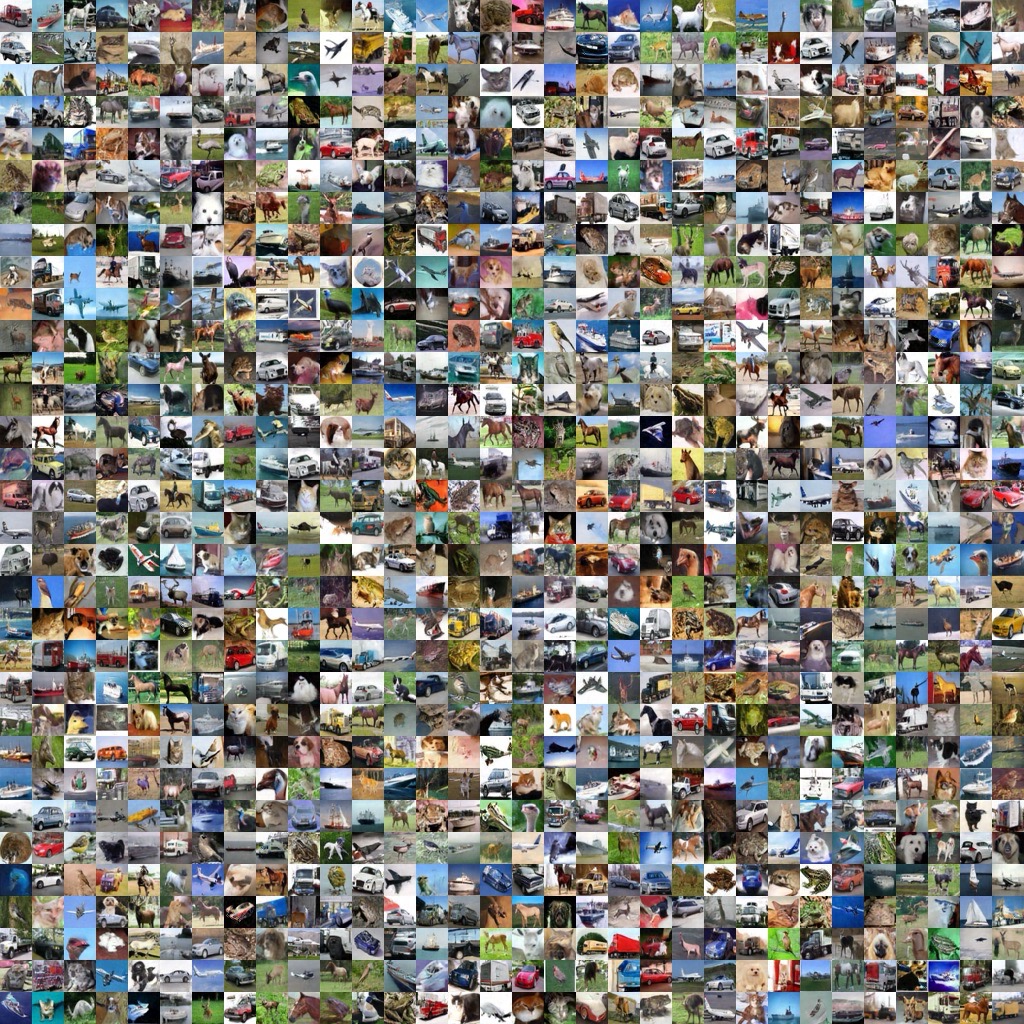}
    \caption{Unconditional CIFAR-10 32x32 random images generated with D-SiD training with noisy dataset with $\sigma=0.2$ (FID: 4.77).}
    \label{fig:2}
\end{figure}

\begin{figure}
    \centering
    \includegraphics[width=\linewidth]{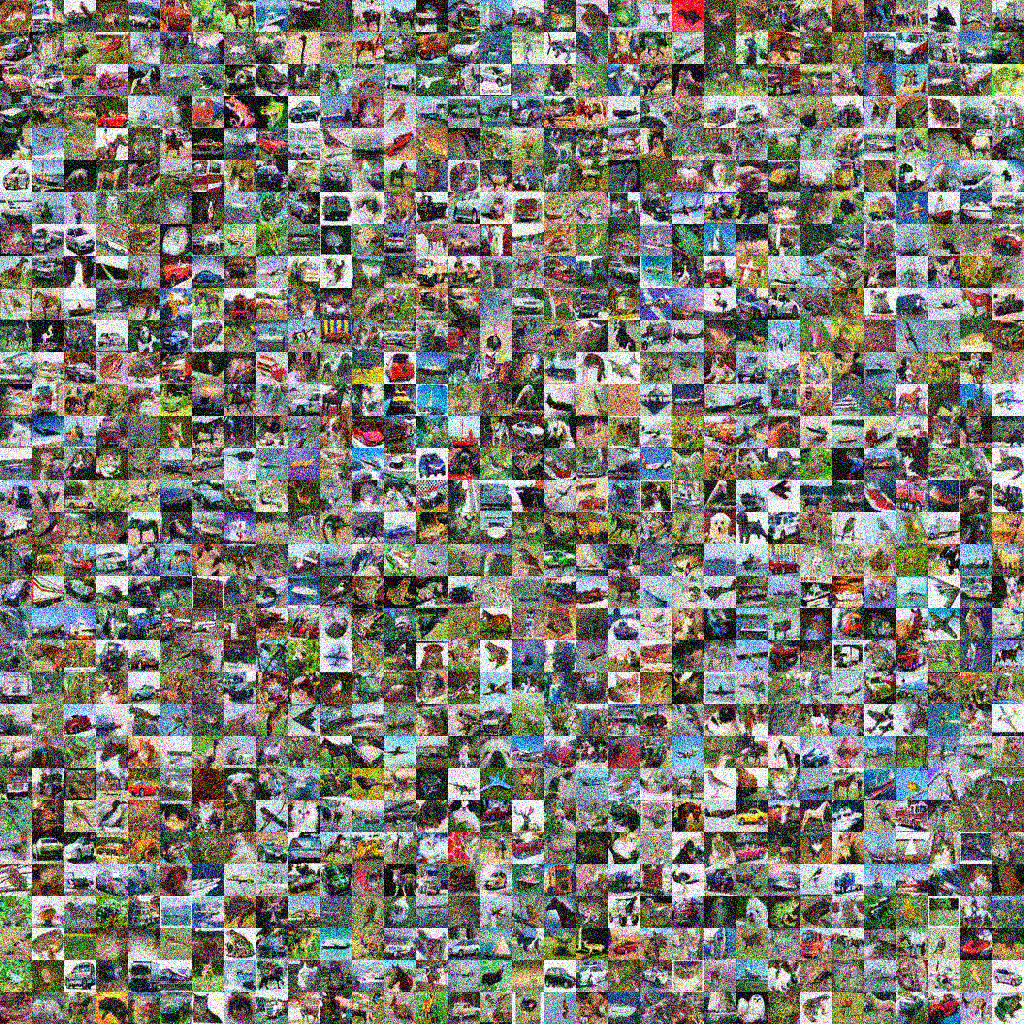}
    \caption{CIFAR-10 32x32 noisy dataset with $\sigma=0.4$ (FID: 205.52).}
    \label{fig:1}
\end{figure}

\begin{figure}
    \centering
    \includegraphics[width=\linewidth]{fig/quality/cifar10_sigma_01_fakes_1.200000_038400.jpeg}
    \caption{Unconditional CIFAR-10 32x32 random images generated with D-SiD training with noisy dataset with $\sigma=0.4$ (FID: 21.63).}
    \label{fig:cifar-0.4}
\end{figure}


\begin{figure}
    \centering
    \includegraphics[width=\linewidth]{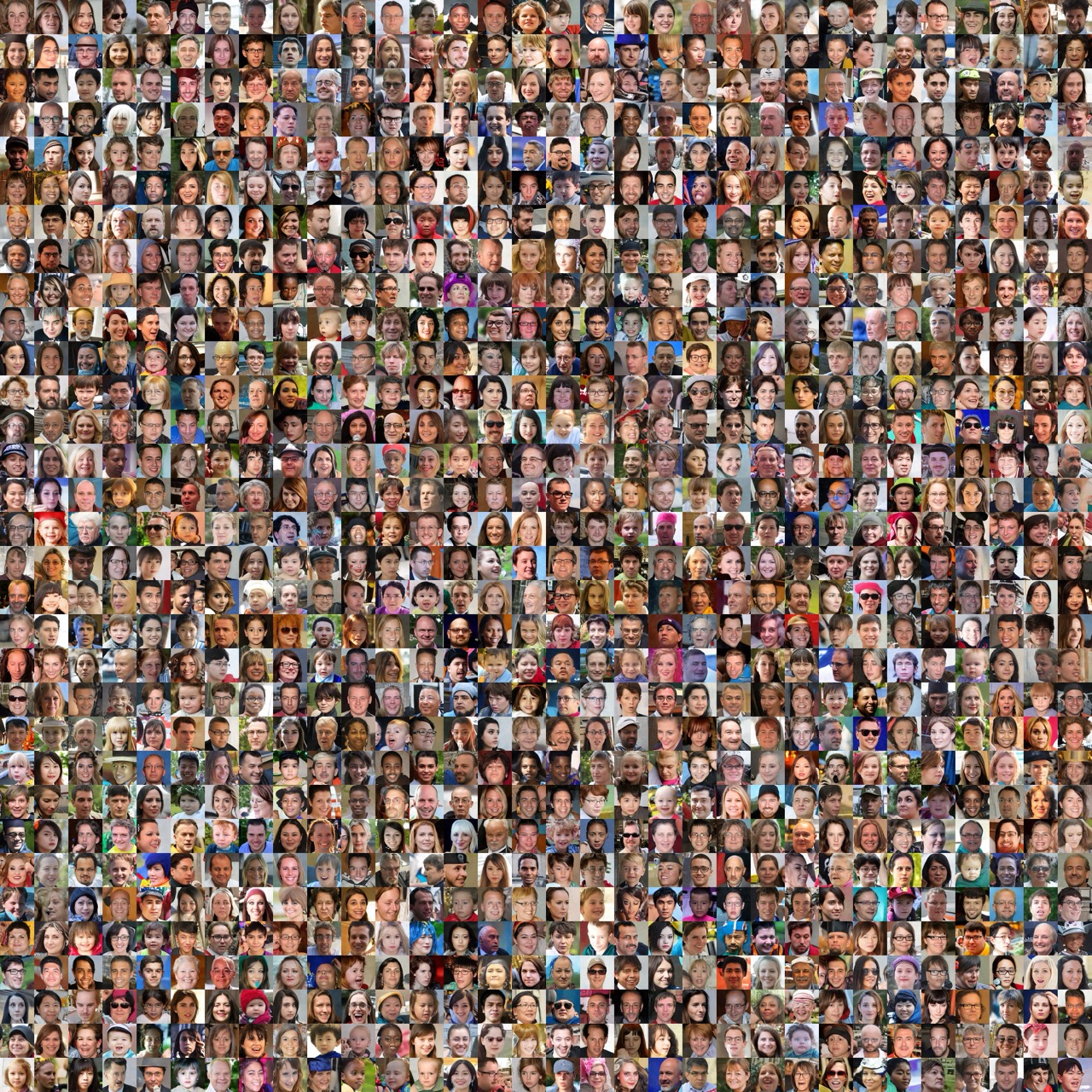}
    \caption{Unconditional FFHQ 64x64 random images generated with D-SiD training on noisy dataset with $\sigma=0.2$ (FID: 6.29).}
    \label{fig:result_ffhq}
\end{figure}


\begin{figure}
    \centering
    \includegraphics[width=\linewidth]{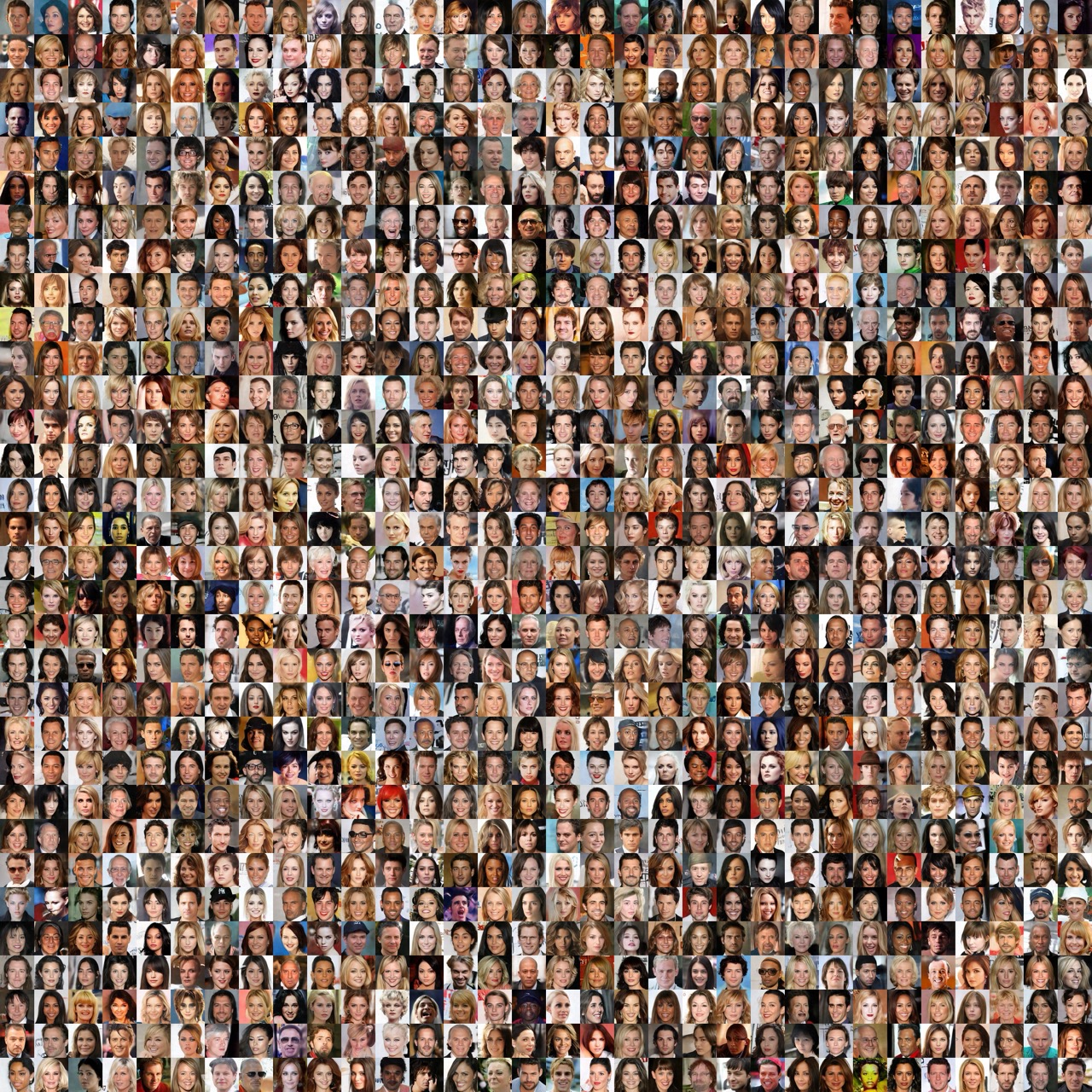}
    \caption{Unconditional CelebA-HQ 64x64 random images generated with D-SiD training on noisy dataset with $\sigma=0.2$ (FID: 6.48).}
    \label{fig:result_celeba}
\end{figure}


\begin{figure}
    \centering
    \includegraphics[width=\linewidth]{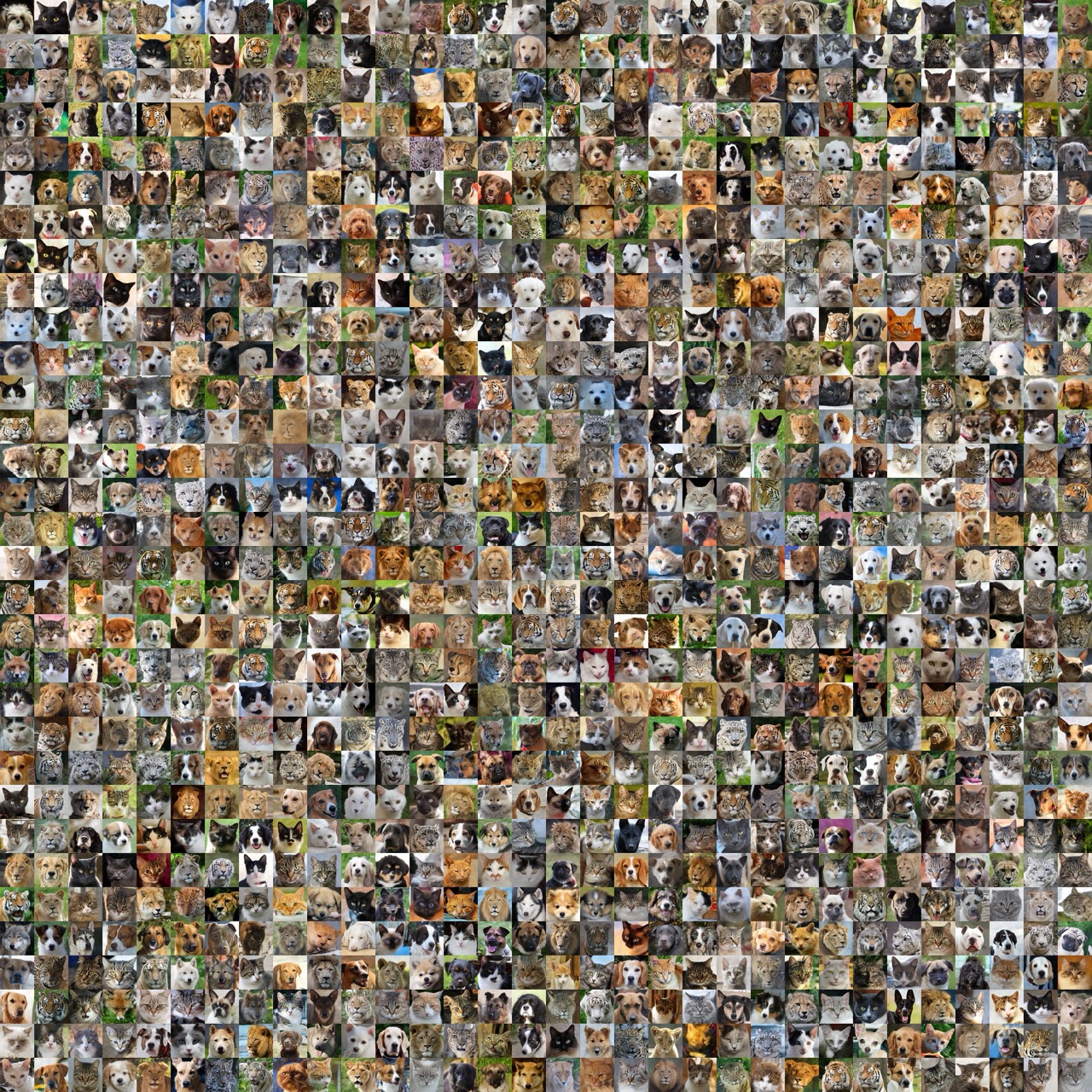}
    \caption{Unconditional AFHQ-v2 64x64 random images generated with D-SiD training on noisy dataset with $\sigma=0.2$ (FID: 5.42).}
    \label{fig:result_afhq}
\end{figure}

\end{document}